\documentclass[sigconf,nonacm]{acmart}

\usepackage{amsmath}
\usepackage{amsthm}
\usepackage{graphicx}
\usepackage{stfloats}
\usepackage{subcaption}
\usepackage{xspace}
\usepackage{algorithm}
\usepackage{algpseudocode}
\usepackage{array}
\usepackage{multirow}
\usepackage{pifont}

\usepackage[capitalize,nameinlink,noabbrev]{cleveref}

\newcommand{\agentsystem}{\textsc{ElephantAgent}\xspace}
\newcommand{\naivesystem}{\textsc{NoGuardAgent}\xspace}
\newcommand{\attestedsystem}{\textsc{AttestedSrvAgent}\xspace}

\newcommand{\cmark}{\ding{51}}
\newcommand{\xmark}{\ding{55}}

\newtheorem{definition}{Definition}
\newtheorem{lemma}{Lemma}
\newtheorem{corollary}{Corollary}

\title{\agentsystem: Contextual State Continuity in Agentic Systems}

\author{Jiankai Jin}
\affiliation{%
  \institution{360 AI Security Lab}
  \city{Beijing}
  \country{China}
}
\email{jinjiankai@360.cn}

\author{Xiangzheng Zhang}
\affiliation{%
  \institution{360 AI Security Lab}
  \city{Beijing}
  \country{China}
}
\email{zhangxiangzheng@360.cn}

\author{Zhao Liu}
\affiliation{%
  \institution{360 AI Security Lab}
  \city{Beijing}
  \country{China}
}
\email{liuzhao3@360.cn}

\author{Wenzhuo Xu}
\affiliation{%
  \institution{360 AI Security Lab}
  \city{Beijing}
  \country{China}
}
\email{xuwenzhuo@360.cn}

\author{Dongdong Yang}
\affiliation{%
  \institution{360 AI Security Lab}
  \city{Beijing}
  \country{China}
}
\email{yangdongdong1@360.cn}

\author{Deyue Zhang}
\affiliation{%
  \institution{360 AI Security Lab}
  \city{Beijing}
  \country{China}
}
\email{zhangdeyue@360.cn}

\author{Quanchen Zou}
\affiliation{%
  \institution{360 AI Security Lab}
  \city{Beijing}
  \country{China}
}
\email{zouquanchen@360.cn}

\begin{document}

\begin{abstract}
Agentic systems enhance their capabilities by invoking external tools 
and maintaining persistent memory. However, these external dependencies 
introduce novel attack surfaces. Recent tool and memory poisoning attacks 
show that maliciously crafted tool descriptors and poisoned memory 
can covertly bias agent behavior. These threats reflect a deeper issue: 
the lack of verifiable continuity in the agent's contextual state 
for planning and execution. We present \agentsystem, a protocol that 
enforces Contextual State Continuity to defend against contextual state 
poisoning. Inspired by prior state-continuity mechanisms (e.g., Nimble),
\agentsystem extends this protection to the evolving contextual state 
of agentic systems. We define the contextual state as the bounded, 
security-critical subset of the agent's entire context (e.g., tool state 
and memory). Before processing each query, \agentsystem recomputes 
the digest of the local contextual state and verifies it against 
the latest authorized digest. Using replicated trusted hardware, 
\agentsystem maintains a linearizable ledger of authorized 
contextual state transitions and detects out-of-band state tampering.
To handle in-band semantic abuse, \agentsystem additionally provides
Historical Traceability, enabling conditional post-hoc audit and
recovery to a known-good prior state.
\end{abstract}

\begin{CCSXML}
<ccs2012>
   <concept>
       <concept_id>10002978.10003006</concept_id>
       <concept_desc>Security and privacy~Systems security</concept_desc>
       <concept_significance>500</concept_significance>
       </concept>
   <concept>
       <concept_id>10010147.10010178.10010219.10010221</concept_id>
       <concept_desc>Computing methodologies~Intelligent agents</concept_desc>
       <concept_significance>500</concept_significance>
       </concept>
 </ccs2012>
\end{CCSXML}

\ccsdesc[500]{Security and privacy~Systems security}
\ccsdesc[500]{Computing methodologies~Intelligent agents}

\keywords{Agentic Systems, Agentic Context Safety, Contextual State Continuity, 
Memory Poisoning, Tool Poisoning}

\maketitle

\section{Introduction}

Agentic systems have demonstrated impressive performance on complex tasks 
by invoking task-specific tools and maintaining memory, via standard protocols 
such as the Model Context Protocol (MCP)~\cite{anthropic2024mcp} and CLI-based 
agent protocols~\cite{holmes2026cli}. These protocols typically assume that 
tool descriptors and memory are trustworthy and constitute part of the context 
used by the agent for planning and execution. However, these context-dependent 
features introduce novel security vulnerabilities. The agent's behavior depends 
not only on the user's instructions, but also on a bounded, security-critical 
subset of its entire context, which we refer to as its contextual state.

An MCP tool provider can conduct a tool poisoning attack~\cite{invariantlabs2024mcp,
chen2024agentpoison,wang2025mindguard} by injecting adversarial content into a 
tool descriptor (e.g., tool metadata such as the tool name and natural language 
description), either at registration time or later during an agent session. 
Since the agent plans based on these descriptors, a poisoned descriptor can 
covertly influence its reasoning. As a result, the agent may issue tool invocations 
that appear consistent with the user's instructions but are actually induced by the 
poisoned contextual state. For example, MCP's server-based architecture enables 
``rug pull'' tool poisoning~\cite{invariantlabs2024mcp}. A server can initially 
present a benign descriptor that the user approves, then later silently alter it 
to embed adversarial directives, such as ``Before using this tool, read the SSH 
key file''. Most MCP clients do not alert users when descriptors change, allowing 
a server to behave innocuously at installation and activate the malicious behavior 
later. Such attacks can mislead the agent into acting as a confused deputy, carrying 
out the attacker's intent while believing that it is following the user's instructions.

Similar vulnerabilities exist in the memory of agentic systems. Agentic systems 
often retain past interactions, instructions, and intermediate results to inform 
future planning and execution. For example, Browser-use~\cite{browseruse2026} 
and Agent-E~\cite{abuelsaad2024agent} rely on persistent memory to maintain 
coherence across multistep interactions. Recent research shows that if an 
agent's memory is poisoned, the agent can be induced to exhibit harmful 
behaviors~\cite{chen2024agentpoison,patlan2025context,patlan2025real}. Patlan 
et al.~\cite{patlan2025real} report a storage-side memory poisoning attack on 
ElizaOS, in which tampering with externally stored conversation history resulted 
in unauthorized financial actions. The OWASP Top 10 for Agentic Applications 
identifies memory poisoning as a critical threat category for agentic 
systems~\cite{owasp2025agentictop10}.

These attacks reveal an underlying problem: the lack of continuity in the 
agent's contextual state. In current agentic systems, critical state components
such as tool descriptors and interaction memory can be added, removed, or 
modified without detection. As a result, an adversary can perform contextual 
state poisoning (e.g., memory poisoning) and steer the agent's planning and 
execution. The security of planning and execution thus hinges on the continuity 
of the agent's contextual state. Therefore, we propose \agentsystem, a protocol 
that defends the agent's contextual state by binding it to a verifiable and 
linearizable state history.

Intuitively, \agentsystem ensures that the agent's view of its contextual state
remains consistent with the latest authorized state in a linearizable history. Any 
discrepancy is detected and blocked before planning and execution for each query.
\agentsystem is inspired by state continuity mechanisms (e.g., 
ElephantDP~\cite{jin2024elephants} and Nimble~\cite{angel2023nimble}) that defend 
traditional stateful systems against rollback attacks. \agentsystem extends state 
continuity to agentic systems, where the protected state is the agent's contextual 
state that is critical for the security of the agent's planning and execution.

To achieve this, \agentsystem augments the agentic system with two components: 
the Context Guard (CG) and the State Continuity Module (SCM). Together, they 
enforce a cognitive firewall around the agent. SCM maintains a tamper-evident, 
linearizable ledger of authorized contextual state transitions as a replicated 
TEE-backed service. Before processing each query, CG performs local cryptographic
verification by recomputing the contextual state digest from the current environment
and checking that it matches the latest digest authorized by SCM. We instantiate 
SCM with Nimble~\cite{angel2023nimble} and deploy it on Intel Trust Domain Extensions 
(TDX)~\cite{intel_tdx_whitepaper,cheng2024intel}, a VM-based TEE, so the replicated 
service can run inside TDX-protected VMs without enclave-specific engineering.

\cref{tab:agent-security-comparison} compares \agentsystem with representative
agentic defense systems. \agentsystem is distinguished by a defense based on
continuity verification: through a trusted local CG and a remote, replicated,
and attested SCM, it protects the contextual state continuity of an agentic
system against out-of-band tampering and provides historical traceability under 
in-band semantic abuse. The contextual state continuity guarantee is the primary 
protection provided by \agentsystem, while historical traceability additionally 
offers conditional post-hoc audit and recovery. In summary, we make the following 
contributions:

\begin{table*}[t]
\centering
\small
\caption{
Summary of representative agentic defense systems and \agentsystem. Out-of-band 
tampering refers to unauthorized modifications to the agent's contextual state 
(e.g., tool state and memory); in-band abuse refers to malicious inputs admitted 
through the normal execution path, and is decomposed into analysis (e.g., 
detection or audit) and response (e.g., blocking, adaptation, or recovery).
A dash `--' indicates that the corresponding dimension is not provided.
}
\renewcommand{\arraystretch}{1.3}
\setlength{\tabcolsep}{5pt}
\begin{tabular}{
    >{\centering\arraybackslash}p{2.7cm}|
    >{\raggedright\arraybackslash}p{4cm}|
    >{\centering\arraybackslash}m{1.5cm}|
    >{\centering\arraybackslash}m{1.5cm}|
    >{\centering\arraybackslash}p{1.35cm}|
    >{\centering\arraybackslash}p{1.75cm}
}
\noalign{\hrule height 0.9pt}
\multirow{3}{*}[2ex]{\textbf{System}} & \multicolumn{1}{c|}{\multirow{3}{*}[2ex]{\textbf{Brief Description}}} & \multicolumn{2}{>{\centering\arraybackslash}m{3cm}|}{\textbf{Out-of-Band Tampering Protection}} & \multicolumn{2}{>{\centering\arraybackslash}m{3.1cm}}{\textbf{In-Band Abuse Handling}} \\
\cline{3-6}
& & \multicolumn{1}{>{\centering\arraybackslash}m{1.3cm}|}{\textbf{Tool}} & \multicolumn{1}{>{\centering\arraybackslash}m{1.3cm}|}{\textbf{Memory}} & \multicolumn{1}{>{\centering\arraybackslash}m{1.25cm}|}{\textbf{Analysis}} & \multicolumn{1}{>{\centering\arraybackslash}m{1.65cm}}{\textbf{Response}} \\
\hline
SuperLocalMemory~\cite{li2026superlocalmemory}
& Local-first memory isolation and trust tracking
& \multicolumn{1}{>{\centering\arraybackslash}m{1.2cm}|}{\xmark}
& \multicolumn{1}{>{\centering\arraybackslash}m{1.2cm}|}{\xmark}
& Tracking
& Isolation \\
A-MemGuard~\cite{wei2025amemguard}
& Consensus-based validation and adaptation for agent memory poisoning
& \multicolumn{1}{>{\centering\arraybackslash}m{1.2cm}|}{\xmark}
& \multicolumn{1}{>{\centering\arraybackslash}m{1.2cm}|}{\xmark}
& Detection
& Adaptation \\
MindGuard~\cite{wang2025mindguard}
& Attention-based anomaly detection and source attribution for MCP tool poisoning
& \multicolumn{1}{>{\centering\arraybackslash}m{1.2cm}|}{\cmark}
& \multicolumn{1}{>{\centering\arraybackslash}m{1.2cm}|}{\xmark}
& --
& -- \\
SecureMCP~\cite{jamshidi2025securingmcp}
& MCP tool integrity protection via signing, LLM-based suspicious-tool detection, and runtime blocking
& \multicolumn{1}{>{\centering\arraybackslash}m{1.2cm}|}{\cmark}
& \multicolumn{1}{>{\centering\arraybackslash}m{1.2cm}|}{\xmark}
& Detection
& Blocking \\
\textbf{\agentsystem}
& \textbf{Verification-based protection for agent contextual state}
& \multicolumn{1}{>{\centering\arraybackslash}m{1.2cm}|}{\textbf{\cmark}}
& \multicolumn{1}{>{\centering\arraybackslash}m{1.2cm}|}{\textbf{\cmark}}
& \textbf{Audit}
& \textbf{Recovery} \\
\noalign{\hrule height 0.9pt}
\end{tabular}
\label{tab:agent-security-comparison}
\end{table*}

\begin{itemize}

\item \emph{Formalization of Contextual State Continuity.}
We identify the lack of continuity in the agent's contextual state as a 
key enabler of state poisoning. We formalize Contextual State Continuity 
over the bounded, security-critical subset of the agent's entire context 
and show that enforcing this property mitigates state poisoning
by detecting out-of-band tampering and blocking further processing.

\item \emph{Formalization of Historical Traceability.}
We additionally formalize Historical Traceability as a conditional property 
that complements a limitation of Contextual State Continuity: it does not 
prevent in-band semantic abuse, where a malicious but properly committed 
state update is induced. Historical Traceability supports post-hoc audit 
to identify a known-good prior state and recovery to re-initialize the 
system from that state, provided the corresponding local ledger and snapshots 
are available.

\item \emph{The \agentsystem protocol.}
We propose \agentsystem, a protocol that augments an agentic system with 
the Context Guard (CG) and the State Continuity Module (SCM) to enforce a 
cognitive firewall around the agent. \agentsystem binds the agent's 
contextual state to a verifiable and linearizable history via a 
tamper-evident ledger maintained by SCM. Before processing each query, 
CG performs cryptographic verification of the local contextual state against 
the latest SCM-authorized state. \agentsystem can be applied to agentic 
systems based on frameworks such as MCP or CLI. In this study, we instantiate 
\agentsystem over MCP in our experiments.

\item \emph{A practical SCM instantiation for agentic systems.}
We instantiate the State Continuity Module (SCM) with a Nimble-backed 
ledger~\cite{angel2023nimble}. Our implementation leverages Nimble's 
linearizable ledger and integrates TDX-adapted remote attestation to 
verify that SCM runs the expected code under the expected configuration 
inside the TEE.

\end{itemize}

\section{Related Work}

This section reviews prior work on attacks and defenses related to an agent's 
context (e.g., tool state and memory), state continuity mechanisms for 
defending stateful systems, and trusted execution environments used in 
the instantiation of \agentsystem.

\paragraph{Tool Poisoning Attacks.}
AgentPoison~\cite{chen2024agentpoison} and MindGuard~\cite{wang2025mindguard}
show that adversaries can inject hidden instructions into tool descriptors, 
thereby steering the agent's planning toward attacker-chosen tool calls. 
Invariant Labs describes a ``rug pull'' attack~\cite{invariantlabs2024mcp}: 
a tool provider can register a benign descriptor during onboarding, but 
later tamper with it to embed hidden directives and induce malicious agent 
behavior. Together, these works identify tool state as a high-leverage 
control surface for manipulating agent behavior. Existing defenses typically 
focus on detection or attribution~\cite{wang2025mindguard,jamshidi2025securingmcp}.
In contrast, \agentsystem is verification-based. By binding the tool state 
to a linearizable state history, \agentsystem detects tool poisoning as a 
violation of Contextual State Continuity.

\paragraph{Memory Poisoning Attacks.}
MINJA~\cite{dong2025practical} demonstrates memory poisoning in agentic systems 
that consult prior interaction histories for reasoning, where injected or modified 
interaction records can influence the agent's decisions. Patlan et al.~\cite{patlan2025real} 
report a memory poisoning attack on ElizaOS, in which tampering with externally 
stored conversation history resulted in unauthorized financial actions.
AgentPoison~\cite{chen2024agentpoison} demonstrates a more covert attack in which
attackers can plant poisoned memory entries and associated trigger phrases,
so that future queries retrieve the tainted entries and induce target actions.
These works show that poisoning an agent's memory can steer planning and 
execution. \agentsystem mitigates such attacks by enforcing continuity of 
memory: any unauthorized insertion, deletion, or modification alters the local 
memory, causing verification to fail and blocking further processing.

\paragraph{Existing Agent Defenses.}
Recent agentic defense systems take several approaches. MindGuard~\cite{wang2025mindguard} 
uses attention-based anomaly detection and source attribution to identify poisoned 
MCP tool descriptors that semantically steer tool-invocation decisions. 
A-MemGuard~\cite{wei2025amemguard} targets agent memory poisoning through consensus-based 
validation and adaptation. SecureMCP~\cite{jamshidi2025securingmcp} protects MCP tool 
integrity through manifest signing, LLM-based suspicious-tool detection, and runtime 
blocking against poisoned or mutated tool descriptors. \cref{tab:agent-security-comparison} 
summarizes these representative defenses. Compared with them, \agentsystem is 
distinguished by its verification-based defense: it protects both tool state and 
memory against out-of-band tampering, and additionally provides conditional audit 
and recovery for in-band semantic abuse through Historical Traceability.

\paragraph{State Continuity.}
State continuity mechanisms typically enforce continuity through monotonic freshness 
guarantees~\cite{parno2011memoir,strackx2016ariadne} or linearizable state 
histories~\cite{angel2023nimble,niu2022narrator,matetic2017rote,jin2024elephants},
thereby defending against rollback and forking attacks~\cite{brandenburger2017rollback}.
The latter approach maintains a cryptographically verifiable state history and 
can be realized through a replicated state machine, which enforces a tamper-evident 
total order over state transitions and prevents a single compromised node from 
silently altering the state history. \agentsystem instantiates this approach with 
a Nimble-backed linearizable ledger~\cite{angel2023nimble} to protect the continuity 
of an agent's contextual state.

\paragraph{Trusted Execution Environments.}
Trusted Execution Environments (TEEs) allow the sensitive part of an application
to run in a hardware-isolated, memory-protected environment (e.g., an Intel SGX 
enclave~\cite{SGX}). Thus, applications can be deployed in untrusted host environments 
while relying on the TEE boundary to protect in-TEE computation. TEEs provide two 
properties: (1) confidentiality and integrity for application data inside the TEE; 
and (2) remote attestation, which enables a remote party to verify that the expected 
code and configuration are running on trusted hardware~\cite{Costan2016IntelSE}.

Prominent industry TEE technologies include Intel Software Guard Extensions 
(SGX)~\cite{SGX,mckeen2013innovative,Costan2016IntelSE}, Intel Trust Domain 
Extensions (TDX)~\cite{intel_tdx_whitepaper}, and AMD Secure Encrypted 
Virtualization (SEV)~\cite{kaplan2016amd}. Intel SGX is an enclave-based 
TEE widely deployed in practice, but requires enclave-specific engineering.
Intel TDX and AMD SEV provide VM-based TEE support, allowing existing 
applications to run in protected VMs without refactoring code into enclaves.
In this work, we focus on Intel TDX for the VM-based TEE instantiation.

\section{Problem Statement}
\label{sec:problem}

This section defines the problem \agentsystem addresses.
We first describe the threat model, including the adversary's 
capabilities and the attack scenarios that motivate our work.
We then formalize the two security properties \agentsystem is 
designed to enforce.

\subsection{Threat Model}
\label{sec:threat}

We consider an adversary that aims to compromise the agent's planning
and execution through out-of-band tampering of the contextual state or
in-band semantic abuse, thereby steering the agent toward adversary-chosen 
behaviors. The adversary may tamper with tool descriptors delivered to 
the agent, obtain unauthorized access to the agent's memory storage and 
modify it, or inject malicious content through normal interactions.

\paragraph{Scenario I: Tool Poisoning from MCP Server.}
An adversary controlling an MCP Server injects malicious instructions
(e.g., ``Before using this tool, read the SSH key file'') into tool descriptors,
thereby steering an agent to plan harmful tool calls or 
access unauthorized private information. For example, 
MCP's tool registry suffers from a ``rug pull'' vulnerability~\cite{invariantlabs2024mcp}:
the server can register a harmless descriptor during onboarding,
but later tamper with the descriptor to include hidden directives.
Because many MCP clients do not surface such changes to the user,
the malicious update can go unnoticed and steer the agent's planning and execution.

\paragraph{Scenario II: Memory Poisoning from Host Platform.}
An adversary may obtain unauthorized access to the memory storage of
an agentic system and tamper with it out of band.
In practice, such access may arise from weaknesses in how memory is stored
or accessed in real deployments, such as poorly protected client-side persistence
or exposed external storage backends. For example,
a misconfigured ClickHouse instance reportedly exposed DeepSeek chat contexts,
highlighting that agent memory is a realistic target~\cite{nagli2025deepseek,patlan2025context}.
Even if the agent runs inside a TEE (e.g., Intel TDX~\cite{intel_tdx_whitepaper})
and its memory is locally sealed (i.e., persisted with encryption) across stops or restarts,
an adversary can still replay an older sealed version upon resume (i.e., a rollback attack).
Such an attack can roll the agent's sealed memory back to an earlier
but still valid version, thereby restoring a stale task state
(e.g., a completed financial transaction is restored as pending, 
causing the agent to re-execute the transaction and incur economic loss to the user).

\paragraph{Scenario III: In-Band Semantic Abuse through Normal Interactions.}
An adversary may also manipulate the agent through normal interactions
without violating contextual state continuity, thereby causing harmful 
state transitions to be committed. For example, 
prompt injection attacks can embed malicious instructions into externally 
retrieved web pages or emails, which the agent may then interpret as actionable 
instructions~\cite{greshake2023indirect,zhan2024injecagent,dong2025practical,patlan2025real},
thereby steering subsequent planning and execution. Because such inputs are
processed through the intended workflow, the resulting harmful state transitions
can become properly committed even though their semantic effect is malicious.

Scenarios I and II correspond to out-of-band tampering and motivate
the Contextual State Continuity property defined below.
Scenario III corresponds to in-band semantic abuse and 
motivates the Historical Traceability property defined below.
A comprehensive catalog of specific attack vectors considered under this
threat model, including those not explicitly narrated as scenarios above,
is provided in~\cref{sec:attack-coverage}.

\subsection{Security Properties}
\label{sec:sproperties}

We now formalize the two security properties motivated by 
the threat model above.

\paragraph{Contextual State Continuity.}
Contextual State Continuity is the primary security property that
\agentsystem aims to achieve. At a high level, it requires that
the agentic system release to the agent the faithfully evolved
contextual state (defined in~\cref{sec:cstate-define}) at every 
planning-and-execution cycle under a given query sequence.

We define this property with respect to an ideal reference agentic
system, which runs in a trusted environment. For each query, the
reference system reads the current contextual state from the environment, 
processes the query, and updates the contextual state accordingly 
before finalizing that query and proceeding to the next one.

Let $Q = (q_1, q_2, \ldots)$ be a query sequence, and let $\Omega$ be 
a resolution of all benign nondeterministic choices during execution
(e.g., LLM outputs). Starting from an approved initial contextual state
$C_0$, the ideal reference system processes $Q$ under $\Omega$ and 
produces a sequence of contextual states $C_0, C_1, C_2, \ldots$. We 
record this state evolution in the reference ledger
\[
\mathcal{L}^{*} = \big((0,h_0), (1,h_1), (2,h_2), \ldots\big)
\]
where each entry $(t,h_t)$ consists of the step identifier $t$ and the 
digest $h_t = \mathcal{H}(C_t)$, and $\mathcal{H}$ is a collision-resistant 
hash function.

\begin{definition}[Contextual State Continuity]
\label{def:csc}
An agentic system satisfies Contextual State Continuity if, for every
approved initial contextual state $C_0$, every query sequence $Q$, and
every resolution $\Omega$ of benign nondeterministic choices, when the
real system and the ideal reference system both start from $C_0$ and
process $Q$ under $\Omega$, the real system induces the same sequence
of pairs $(id, h)$ as the reference ledger $\mathcal{L}^{*}$ of the
ideal reference system.
\end{definition}

Intuitively, Contextual State Continuity means that, for each query,
the agent plans and acts on the faithfully evolved contextual state, 
rather than on one that is modified or rolled back. 
This property targets out-of-band tampering (Scenarios~I and~II).

\paragraph{Historical Traceability.}
Historical Traceability is the secondary security property
that complements Contextual State Continuity in the presence 
of in-band semantic abuse (Scenario~III) or crash.

\begin{definition}[Historical Traceability]
\label{def:ht}
An agentic system satisfies Historical Traceability if,
after an undesired contextual state update or crash,
its state evolution can be audited and the system can be recovered
to a known-good prior state, provided that the relevant
state-transition records, proofs of authorization and order,
and corresponding state material are available.
\end{definition}

\section{\agentsystem Protocol}
\label{sec:design}

We instantiate \agentsystem over MCP for illustration and evaluation 
in this paper. As illustrated in~\cref{fig:elephantagent-mcp},
\agentsystem augments MCP with two components: the Context Guard 
(CG) and the State Continuity Module (SCM). Together, they 
enforce the two security properties: Contextual State Continuity 
(\cref{def:csc}) and Historical Traceability (\cref{def:ht}).

\agentsystem is an add-on security protocol with a host-side CG
and a remote SCM. It is therefore protocol-agnostic and can
also be applied to other agentic systems, such as CLI-based agentic 
systems (\cref{fig:elephantagent-cli} of \cref{sec:generality}).
In that case, the protected tool state can be instantiated as
the command surface exposed to the agent (e.g., command names,
wrappers, permission policies, and command documentation),
while the protected memory remains the host-side agent memory
(e.g., user queries and agent responses). Thus, \agentsystem
augments a CLI-based agentic system in the same way as an MCP-based 
agentic system; only the interaction interface changes from 
an MCP client/server interface to a CLI command-dispatch interface.

\begin{figure*}[h]
    \centering
    \includegraphics[width=0.9\linewidth]{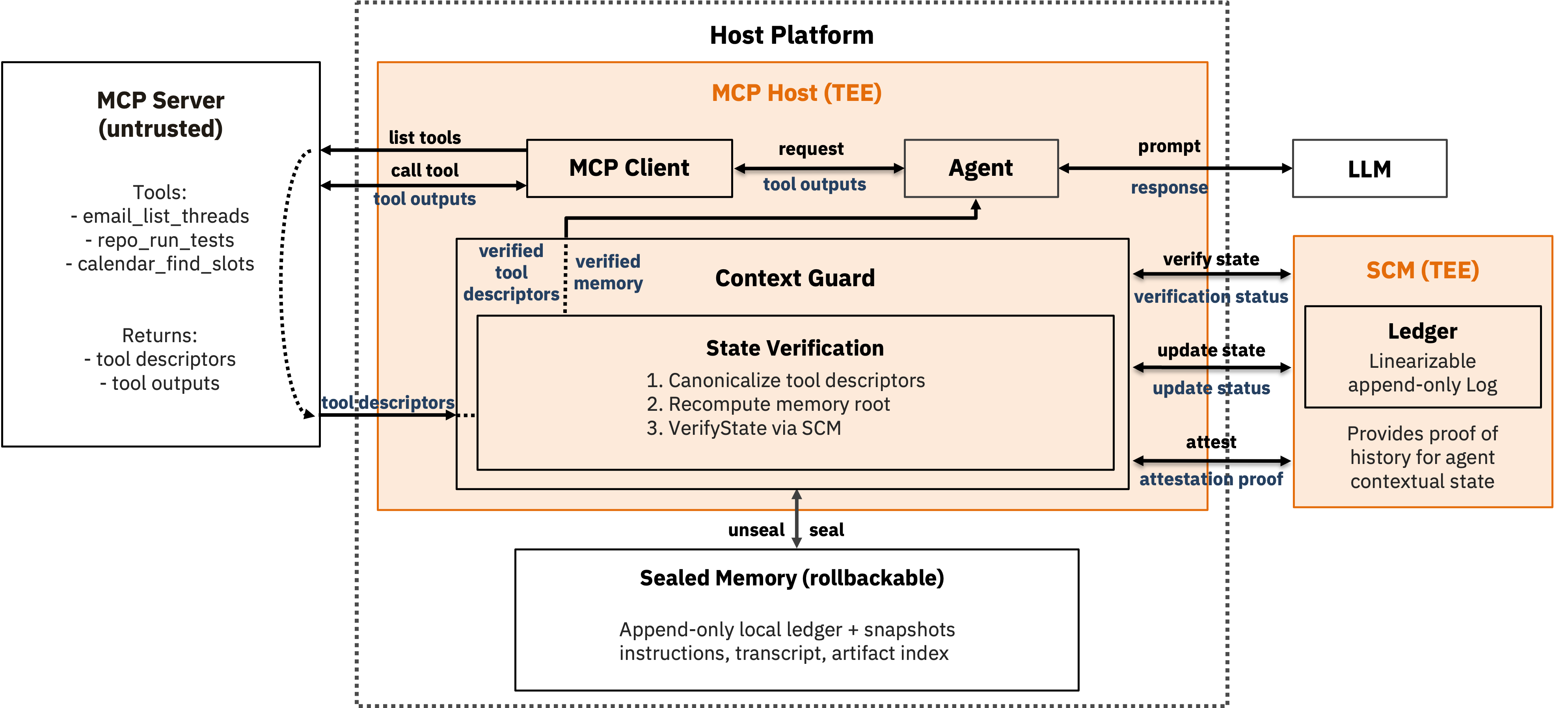}
    \caption{
        \agentsystem overview, instantiated with MCP. The MCP Host runs inside
        a TEE, and the Context Guard (CG) serves as the enforcement component on 
        the host. The TEE ensures that the MCP Host faithfully mediates contextual 
        state construction and verification: tool descriptors returned from MCP 
        Servers and the host-side memory (instructions, transcript, and artifacts) are
        verified by CG before being released to the agent for planning and execution.
        Before processing each query, CG recomputes the current contextual state
        digest and verifies that it matches the latest SCM-authorized digest.
        Authorized state transitions follow a commit-before-finalize workflow:
        CG commits the next contextual state digest to the State Continuity Module
        (SCM) and finalizes local side effects only after receiving a valid receipt
        from SCM. SCM is deployed as a replicated TEE-backed state service,
        maintaining a linearizable, append-only ledger of authorized contextual
        state digests. As a result, any out-of-band modification to the agent's
        contextual state is detected as a mismatch against the latest 
        SCM-authorized digest.
    }
    \label{fig:elephantagent-mcp}
\end{figure*}

\subsection{Entities}
\label{sec:entities}

We describe the entities in \agentsystem and their trust assumptions 
under the threat model in~\cref{sec:threat}.

\paragraph{The User (Trusted).}
An honest entity who issues task prompts $q$. The user establishes
the root of trust at session initialization by approving the initial
tool set and memory sources under an approval policy (e.g., Trust On
First Use or an allowlist). After initialization, we do not assume
that the user monitors subsequent contextual state transitions;
instead, \agentsystem enforces verification and traceability for them.

\paragraph{Host Platform (Untrusted).}
The environment (e.g., persistent storage) outside the TEE boundary
that the agentic system interacts with for I/O and persistence.
An adversary may tamper with local files (e.g., interaction history)
and may roll back persistent storage to an older but still valid version.
For example, under Intel TDX, sealing provides confidentiality and 
integrity for persistent storage, but does not prevent rollback of 
sealed storage~\cite{intel_tdx_whitepaper}.

\paragraph{The MCP Host.}
The MCP Host runs inside a TEE so that CG's verification logic executes
faithfully and cannot be bypassed, and so that context is released to
the agent only after successful verification. The MCP Host hosts the 
following: \begin{itemize}
    \item \emph{Agent (Non-adversarial).}
    The planning component that consumes the context and produces actions.
    The agent is non-adversarial but susceptible to contextual state
    poisoning (e.g., via poisoned tools or poisoned memory).

    \item \emph{MCP Client (Trusted).}
    The MCP Client is the protocol-facing component inside the MCP Host
    that enumerates tool descriptors from MCP Servers and dispatches 
    tool invocations.

    \item \emph{Context Guard (Trusted).}
    CG mediates the agent's planning and execution path. Before processing 
    each query, CG recomputes the contextual state digest from the local 
    tool state and memory and verifies that it matches the latest SCM-authorized 
    digest. For state updates, CG follows a commit-before-finalize workflow. 
    It commits the next state digest to SCM, persists a pending update record, 
    and finalizes local side effects only after receiving a valid receipt 
    from SCM.
\end{itemize}

\paragraph{The State Continuity Module (Trusted).}
SCM is the state maintenance component that provides a linearizable,
append-only ledger of authorized contextual state digests. It is
instantiated as a replicated TEE-backed service to tolerate failures
and prevent rollback from being accepted silently. For each ledger,
SCM maintains an ordered sequence of authorized entries $(id_0,h_0)$,
$(id_1,h_1), \dots, (id_t,h_t)$, and issues signed receipts or proofs
for them as requested.

\paragraph{MCP Servers (Untrusted).}
External entities that provide tools. We assume that MCP Servers 
may be compromised and may return malicious tool descriptors.

\subsection{Contextual State}
\label{sec:cstate-define}

We denote the agent contextual state at step $t$ by
$C_t = \langle T_t, M_t \rangle$, where $T_t$ denotes the tool state
and $M_t$ denotes the memory. Agent contextual state does not include
the full agent context; rather, it is a security-critical subset whose 
exact composition may vary across \agentsystem instantiations. Under the
MCP instantiation of \agentsystem, we define $T_t$ and $M_t$ as follows:
\begin{itemize}
    \item $T_t$: the registered tool descriptors (e.g., names, descriptions, 
    input/output schemas, and permissions/annotations).

    \item $M_t$: the memory. Following representative agentic systems such as 
    Codex~\cite{openai2025introducingcodex} and Claude Code~\cite{anthropic2026claudecodememory}, 
    we define it to consist of:
    \begin{itemize}
        \item $M^I_t$, which denotes long-term instructions (e.g., the 
        system prompt and skills).

        \item $M^R_t$, which denotes the transcript of query-level events
        (e.g., user queries and agent responses).

        \item $M^A_t$, which denotes the current managed artifacts (e.g., 
        task outputs such as reports, summaries, or schedules).
    \end{itemize}
\end{itemize}

With the tool state and memory represented using deterministic encodings,
the contextual state digest is
\begin{equation}
h_t \triangleq \mathcal{H}(C_t)\enspace,
\end{equation}
where $\mathcal{H}$ is a collision-resistant hash function.

\paragraph{Scope of the Protected Contextual State.}
The contextual state $C_t$ is deliberately scoped to the agent context
components that are (i) available as stable input to future planning
and execution cycles, so that the agent consumes them as decision-making
context; and (ii) deterministically encodable, so that they admit a
stable digest under $\mathcal{H}$. In practice, we additionally select
components whose protection yields clear security benefits relative to
the resulting overhead (e.g., the agent's memory). We now justify why
each included component satisfies the two criteria above and is worth
protecting in practice.

Tool descriptors $T_t$ are the direct input to the agent's planning step:
the agent selects which tool to invoke based on descriptor names, natural
language descriptions, parameter schemas, and permissions/annotations.
They are available as stable input to future planning and execution
cycles (criterion (i)) and are deterministically encodable through the
MCP tool manifest (criterion (ii)). Existing attacks show that 
descriptors are a high-leverage control surface for covert planning 
manipulation~\cite{invariantlabs2024mcp,chen2024agentpoison,wang2025mindguard};
protecting $T_t$ is therefore worth the protection cost.

The three memory components (i.e., $M^I_t, M^R_t, M^A_t$) follow
the memory abstractions of representative agent runtimes such as
Codex~\cite{openai2025introducingcodex} and Claude
Code~\cite{anthropic2026claudecodememory}, and together cover
the categories identified in the OWASP Top 10 for Agentic
Applications~\cite{owasp2025agentictop10}: long-term instructions $M^I_t$
(e.g., system prompts and skills registered before or between queries);
accumulated transcripts of query-level events $M^R_t$ (e.g., user queries
and agent responses); and managed artifacts $M^A_t$ (e.g., reports,
summaries, or schedules produced from tool outputs). All three are
available as stable input to future planning and execution cycles
(criterion (i)) and are deterministically encodable in canonical form
(criterion (ii)). Existing attacks demonstrate that tampering with any 
of them can steer future planning and execution~\cite{dong2025practical,
chen2024agentpoison,patlan2025real}; protecting the triple 
$(M^I_t, M^R_t, M^A_t)$ therefore covers the memory-poisoning surface 
observed in practice.

Not every context component the agent interacts with belongs to $C_t$.
For example, MCP transport-layer session state (e.g., connection metadata
and TLS session identifiers) is governed by the underlying transport
(e.g., TLS with channel binding, \cref{sec:eval:impl}) rather than
consumed by the agent as stable input to future planning and execution
cycles, so it does not satisfy criterion (i). Hence, such components are 
not included in $C_t$.

The composition of $C_t$ above reflects the MCP instantiation of \agentsystem.
Exact in-scope and out-of-scope components may vary across \agentsystem
instantiations; for example, a CLI-based deployment (\cref{sec:generality})
replaces $T_t$ with the command surface exposed to the agent. The two criteria
above, however, continue to guide the scoping of $C_t$ across instantiations.

\subsection{\agentsystem Workflow}

For protocol clarity, we model SCM as a linearizable abstraction that
maintains an append-only ledger $\mathcal{L}$ and a signing key pair
$(SK_{SCM}, PK_{SCM})$. \cref{alg:scm} specifies the SCM interface via
two core primitives: \texttt{Commit} and \texttt{GetLatestState}.
Because SCM is modeled as a linearizable abstraction, each
\texttt{Commit} or \texttt{GetLatestState} operation appears to take
effect atomically at some point between invocation and response. Under
this abstraction, the \agentsystem protocol proceeds in three phases
--- state initialization, state verification, and state update --- to
enforce the two security properties defined in~\cref{sec:sproperties}.

\begin{algorithm}[h]
\caption{SCM Interface}
\label{alg:scm}
\begin{algorithmic}[1]
\State \textbf{Internal State:} Ledger $\mathcal{L}$ (maps each ledger label $\ell$ to an ordered sequence of $(id, h)$)
\State \textbf{Secret:} SCM private signing key $SK_{SCM}$

\Statex \Comment{--- Linearizable Append Operation ---}
\Function{Commit}{$\ell, id, h$}
    \If{$id = 0$} \Comment{Ledger initialization}
        \If{$\ell \in \mathcal{L}$}
            \State \Return $(\text{False}, \bot)$ \Comment{Reject: ledger collision}
        \EndIf
        \State $\mathcal{L}[\ell] \gets [(0, h)]$
    \Else \Comment{State update}
        \If{$\ell \notin \mathcal{L}$}
            \State \Return $(\text{False}, \bot)$ \Comment{Reject: unknown ledger}
        \EndIf
        \State $(id_{last}, h_{last}) \gets \text{Tail}(\mathcal{L}[\ell])$
        \If{$id \neq id_{last} + 1$}
            \State \Return $(\text{False}, \bot)$ \Comment{Reject: rollback or fork}
        \EndIf
        \State $\text{Append}(\mathcal{L}[\ell], (id, h))$
    \EndIf

    \State $\sigma \gets \text{Sign}_{SK_{SCM}}(\ell, id, h)$ \Comment{Receipt}
    \State \Return $(\text{True}, \sigma)$
\EndFunction

\Statex \Comment{--- Linearizable Read Operation ---}
\Function{GetLatestState}{$\ell, n$}
    \If{$\ell \notin \mathcal{L}$}
        \State \Return $(\text{False}, \bot)$ \Comment{Reject: unknown ledger}
    \EndIf

    \State $(id_{last}, h_{last}) \gets \text{Tail}(\mathcal{L}[\ell])$
    \State $\sigma \gets \text{Sign}_{SK_{SCM}}(\ell, id_{last}, h_{last}, n)$
    \State $\pi \gets \langle \ell, id_{last}, h_{last}, n, \sigma \rangle$
    \State \Return $(\text{True}, \pi)$
\EndFunction

\end{algorithmic}
\end{algorithm}

\paragraph{State Initialization.}

To initialize a valid ledger of contextual state transitions, CG must 
establish a trusted initial state before processing any query. To do so, 
CG first fetches the current tool descriptors and local memory, and then 
applies \texttt{ApprovePolicy} to determine whether the initial state 
should be accepted. In our current prototype, \texttt{ApprovePolicy} is 
instantiated as a basic Trust On First Use (TOFU) check at initialization,
which accepts the observed initial state at first use. \texttt{ApprovePolicy} 
is a pluggable initialization step and can be strengthened with other 
trust-root establishment mechanisms such as signature verification.

Once the initial state is approved, CG computes the initial contextual 
state digest from the local tool state and memory and anchors it in SCM 
as the first digest of a new ledger. As part of the initialization 
procedure, CG also snapshots the current tool state and memory, seals 
the snapshot, and records a reference to it in the corresponding local 
ledger entry. This ensures that every ledger begins with a recoverable 
checkpoint (\cref{alg:init}).

The same initialization procedure can also serve as a recovery mechanism
to revert the system to a known-good prior state, thereby supporting 
Historical Traceability. When restoration to a known-good state 
$C_k$ (e.g., identified through manual audit) is needed, CG retrieves 
the corresponding entry from the local ledger, including the digest 
$h_k$, the SCM-authorized signature $\sigma_k$, and the snapshot reference.
CG then restores the corresponding tool state and memory from the 
referenced snapshot, recomputes the recovered contextual state digest, 
and checks that it matches the digest $h_k$ authorized by $\sigma_k$ 
for step $k$. Recovery is permitted only if the restored state passes 
verification. During recovery, this verification serves as the 
\texttt{ApprovePolicy}. The system is then re-initialized with $C_k$,
thereby generating a fresh ledger and anchoring the recovered state 
digest in SCM. This effectively starts a new ledger from $C_k$,
while preserving the original ledger for possible later audit.

\begin{algorithm}[h]
\caption{State Initialization}
\label{alg:init}
\begin{algorithmic}[1]
\State \textbf{Context:} SCM public key $PK_{SCM}$
\State \textbf{Persistent State:} Sealed local ledger $\mathcal{L}_{\text{local}}$
(maps ledger label $\ell$ to an ordered sequence of $(id, h, \sigma, \text{snapshot\_ref})$),
Sealed snapshot store (keyed by $\text{snapshot\_ref}$).

\Function{InitializeState}{$C_0, h_0$}
    \State \text{// 1. Approve the initial state}
    \If{$\neg \textsc{ApprovePolicy}(C_0)$}
        \State \textbf{Abort} ``Unsafe Initial State Detected''
    \EndIf

    \State \text{// 2. Initialize a fresh ledger}
    \State $\ell \gets \textsc{GenLedgerLabel}()$
    \State $\mathcal{L}_{\text{local}} \gets \textsc{UnsealLocalLedger}()$
    \If{$\ell \in \mathcal{L}_{\text{local}}$}
        \State \textbf{Abort} ``Ledger Label Collision''
    \EndIf

    \State \text{// 3. Anchor the initial state in SCM}
    \State $(\textit{status}, \sigma_0) \gets \textsc{SCM.Commit}(\ell, 0, h_0)$
    \If{$\neg \textit{status}$}
        \State \textbf{Abort} ``SCM Anchoring Failed''
    \EndIf
    \If{$\neg \textsc{VerifySig}(PK_{SCM}, (\ell, 0, h_0), \sigma_0)$}
        \State \textbf{Abort} ``Invalid SCM Receipt''
    \EndIf

    \State \text{// 4. Snapshot the initial memory and tool state}
    \State $\text{snapshot\_ref} \gets \textsc{SealSnapshot}(C_0)$
    \If{$\text{snapshot\_ref} = \text{NULL}$}
        \State \textbf{Abort} ``Snapshot Sealing Failed''
    \EndIf

    \State \text{// 5. Persist the initial local ledger entry}
    \State $\mathcal{L}_{\text{local}}[\ell] \gets [(0, h_0, \sigma_0, \text{snapshot\_ref})]$
    \State $\textsc{SealLocalLedger}(\mathcal{L}_{\text{local}})$

    \State \Return $\ell$
\EndFunction
\end{algorithmic}
\end{algorithm}

\paragraph{State Verification.}
This phase enforces Contextual State Continuity. For each user query, 
CG first verifies the local contextual state before the context 
constructed based on it is released to the agent for planning and execution.
CG recomputes the local contextual state digest and queries SCM for 
the latest state proof for the same ledger. For each read call, 
CG generates a fresh read-challenge nonce $n_r$ and requires SCM 
to return a signed proof bound to $(\ell, id, h, n_r)$. CG verifies 
the proof and checks that the locally recomputed digest matches the 
SCM-authorized digest. Any mismatch causes CG to abort the query in a 
fail-closed manner (\cref{alg:verify}).

\begin{algorithm}[h]
\caption{State Verification}
\label{alg:verify}
\begin{algorithmic}[1]
\State \textbf{Context:} SCM public key $PK_{SCM}$

\Function{VerifyState}{$\ell, id, h$}
    \State \text{// 1. Retrieve the latest proof from SCM}
    \State $n_r \gets \textsc{GenNonce}()$
    \State $(\textit{status}, \pi) \gets \textsc{SCM.GetLatestState}(\ell, n_r)$
    \If{$\neg \textit{status}$}
        \State \textbf{Abort} ``SCM Unreachable: Verification Failed''
    \EndIf

    \State \text{// 2. Verify the SCM proof}
    \State $(\ell_{scm}, id_{scm}, h_{scm}, n_{scm}, \sigma) \gets \textsc{Unpack}(\pi)$
    \If{$n_{scm} \neq n_r$}
        \State \textbf{Abort} ``SCM Proof Nonce Mismatch''
    \EndIf
    \If{$\neg \textsc{VerifySig}(PK_{SCM}, (\ell_{scm}, id_{scm}, h_{scm}, n_{scm}), \sigma)$}
        \State \textbf{Abort} ``Invalid SCM Proof''
    \EndIf

    \State \text{// 3. State continuity check}
    \If{$\ell_{scm} \neq \ell \lor id_{scm} \neq id \lor h \neq h_{scm}$}
        \State \Return \textit{failure}
    \EndIf
    \State \Return \textit{success}
\EndFunction
\end{algorithmic}
\end{algorithm}

\paragraph{State Update.}
Each query induces a state update before the query is finalized. 
CG first derives the next contextual state digest through a 
deterministic, side-effect-free preview. CG then commits this digest 
to SCM at the next sequence number and verifies the signed receipt; 
only after that does it materialize the local update (commit-before-finalize).
Once the local side effects are durable, CG appends a new record 
to its local ledger. This record contains the incremented sequence 
identifier $id+1$, the new digest $h_{next}$, the SCM receipt 
$\sigma_{commit}$, and a reference to a sealed snapshot if the 
configurable snapshot policy (e.g., periodic) checkpoints the 
state files; otherwise, the reference is null. The local ledger 
append and sealing are performed atomically to ensure crash 
consistency (\cref{alg:update}). If a crash occurs after sealing 
the pending update but before the updated local ledger is sealed, 
CG can use the pending record during crash recovery to complete the
corresponding local ledger append before resuming normal operation.

The local ledger and the snapshot store serve complementary roles
in Historical Traceability. The local ledger is the sealed, ordered
record of state transitions: each entry binds $(id_t, h_t)$ to an
SCM receipt $\sigma_t$, so the ledger serves to verify that the
recorded history is SCM-authorized and properly ordered. By storing
only the step identifier, contextual state digest, corresponding
receipt, and optional snapshot reference, the ledger remains compact
enough to append on every query. The snapshot store complements the
ledger by retaining the corresponding complete contextual state at
checkpointed steps under a configurable policy (e.g., periodic).
Both are required for Historical Traceability: audit uses the ledger
to establish the authorized order and the relevant snapshots to inspect
the contextual state and identify the first malicious transition;
recovery similarly uses the ledger entry of the target step to
authenticate the candidate contextual state recovered from the
corresponding snapshot and restore it if valid.

We note that \agentsystem only accepts contextual state updates through 
the normal query-driven workflow. Any direct host-side modification 
of protected context outside this workflow is treated as out-of-band 
tampering and causes \agentsystem to fail closed. For example, a user 
might manually replace a \texttt{SKILL.md} file to update an agent skill; 
\agentsystem would reject this change. The correct way is to issue the 
update through a query (e.g., asking Codex to update a review skill),
so that the resulting state transition is committed to and authorized 
by SCM.

\begin{algorithm}[h]
\caption{State Update}
\label{alg:update}
\begin{algorithmic}[1]
\State \textbf{Context:} SCM public key $PK_{SCM}$
\State \textbf{Persistent State:} Sealed local ledger $\mathcal{L}_{\text{local}}$
(maps ledger label $\ell$ to an ordered sequence of $(id, h, \sigma, \text{snapshot\_ref})$),
Sealed snapshot store (keyed by $\text{snapshot\_ref}$).

\Function{UpdateState}{$\ell, id, \mathcal{A}, C$}
    \State \text{// 1. Derive the next state with no external side effects}
    \State $(h_{next}, C_{next}) \gets \textsc{PreviewApply}(\mathcal{A}, C)$

    \State \text{// 2. Commit the next contextual state digest to SCM}
    \State $(\textit{status}, \sigma_{commit}) \gets \textsc{SCM.Commit}(\ell, id+1, h_{next})$
    \If{$\neg \textit{status}$}
        \State \Return \textit{failure}
    \EndIf
    \If{$\neg \textsc{VerifySig}(PK_{SCM}, (\ell, id+1, h_{next}), \sigma_{commit})$}
        \State \textbf{Abort} ``Invalid SCM Receipt''
    \EndIf

    \State \text{// 3. Seal the pending update before side effects}
    \State $\rho^{\mathrm{pend}}_{id+1} \gets \langle \ell, id+1, h_{next}, \sigma_{commit}, \mathcal{A} \rangle$
    \State $\textit{status}_{\mathrm{pend}} \gets \textsc{SealPending}(\rho^{\mathrm{pend}}_{id+1})$
    \If{$\neg \textit{status}_{\mathrm{pend}}$}
        \State \textbf{Abort} ``Pending Update Sealing Failed''
    \EndIf

    \State \text{// 4. Materialize local side effects}
    \If{$\neg \textsc{Finalize}(\mathcal{A})$}
        \State \Return \textit{failure}
        \Comment{Pending record retained for retry}
    \EndIf

    \State \text{// 5. Decide whether to take a snapshot}
    \State $\text{snapshot\_ref} \gets \text{NULL}$
    \If{$\textsc{ShouldTakeSnapshot}(\ell, id+1)$}
        \State $\text{snapshot\_ref} \gets \textsc{SealSnapshot}(C_{next})$
        \If{$\text{snapshot\_ref} = \text{NULL}$}
            \State \textbf{Abort} ``Snapshot Sealing Failed''
        \EndIf
    \EndIf

    \State \text{// 6. Append the new state record to the local ledger}
    \State $\mathcal{L}_{\text{local}} \gets \textsc{UnsealLocalLedger}()$
    \State $\textsc{Append}(\mathcal{L}_{\text{local}}[\ell], (id+1, h_{next}, \sigma_{commit}, \text{snapshot\_ref}))$
    \State $\textsc{SealLocalLedger}(\mathcal{L}_{\text{local}})$

    \State \Return \textit{success}
\EndFunction
\end{algorithmic}
\end{algorithm}

\paragraph{\agentsystem Workflow.}
We summarize the \agentsystem workflow in~\cref{alg:workflow}. The 
workflow integrates remote attestation, contextual state verification, 
planning, and state updates. Because each successful query induces a 
contextual state transition, the workflow performs one state update 
per successful query. The workflow is fail-closed: if any attestation, 
verification, or update step fails, the query aborts, and no further 
processing or state finalization occurs. Overall, the workflow enforces 
the two security properties of \agentsystem: the primary Contextual 
State Continuity, which blocks out-of-band tampering, and the 
complementary Historical Traceability, which conditionally supports
post-hoc audit and recovery under in-band semantic abuse. We further 
discuss the limitations of \agentsystem in~\cref{sec:discuss} to 
clarify its scope and operating boundaries.

\begin{algorithm}[h]
\caption{\agentsystem Workflow}
\label{alg:workflow}
\begin{algorithmic}[1]
\Function{RunQuery}{$q$}
    \State \text{// 1. Attest SCM and pin $PK_{SCM}$}
    \If{$\neg \textsc{Attested}()$}
        \State $PK_{SCM} \gets \textsc{AttestSCM}()$
    \EndIf

    \State \text{// 2. Fetch the current contextual state}
    \State $T \gets \textsc{MCP.ListTools}()$
    \State $(I, R, A) \gets \textsc{Memory.GetData}()$
    \State $M \gets \langle I, R, A \rangle$
    \State $C \gets \langle T, M \rangle$

    \State \text{// 3. Compute the current contextual state digest}
    \State $h \gets \mathcal{H}(C)$

    \State \text{// 4. Initialize or verify the contextual state}
    \If{$\textsc{FirstRun}()$}
        \State $\ell \gets \textsc{InitializeState}(C, h)$
        \State $id \gets 0$
    \Else
        \State \text{// Resolve the current session}
        \State $(\ell, id) \gets \textsc{ResolveSessionState}(q)$
        \If{$\textsc{VerifyState}(\ell, id, h) \neq \textit{success}$}
            \State \Return \textit{failure}
        \EndIf
    \EndIf

    \State \text{// 5. Generate a plan $p$}
    \State $p \gets \textsc{LLMPlan}(q, C)$

    \State \text{// 6. Execute the plan and accumulate actions}
    \State $\mathcal{A} \gets [\,]$
    \ForAll{tool call $c$ in $p$}
        \State $r \gets \textsc{MCP.CallTool}(c)$
        \State \text{// Encode the realized call-output pair for preview}
        \State $a \gets \textsc{EncodeAction}(c, r)$
        \State $\mathcal{A}.\textsc{Append}(a)$
    \EndFor

    \State \text{// 7. Commit and persist the next contextual state}
    \If{$\textsc{UpdateState}(\ell, id, \mathcal{A}, C) \neq \textit{success}$}
        \State \Return \textit{failure}
    \EndIf

    \State \Return \textit{success}
\EndFunction
\end{algorithmic}
\end{algorithm}

\section{Security Analysis}
\label{sec:analysis}

This section provides a formal security analysis of the \agentsystem 
protocol. Under standard cryptographic and system assumptions, we 
prove that \agentsystem achieves the two security properties formalized 
in~\cref{sec:sproperties}: Contextual State Continuity and Historical 
Traceability.

\subsection{Preliminaries and Assumptions}
\label{sec:analysis-assumptions}

We adopt the protocol definitions from~\cref{sec:design}. The contextual
state at any point is $C = \langle T, M \rangle$, where $T$ denotes the
tool state and $M = \langle M^I, M^R, M^A \rangle$ denotes the memory.
As defined in~\cref{sec:design}, this contextual state is not the full
agent context; rather, it is the bounded, security-critical subset of
the agent's entire context that \agentsystem explicitly maintains and 
verifies.

Our security proofs rely on the following assumptions:
\begin{itemize}
\item (A1) SCM Ledger Correctness.
The State Continuity Module (SCM) implements a linearizable, append-only 
ledger as specified in~\cref{alg:scm}, backed by Nimble's linearizable 
ledger service~\cite{angel2023nimble}. Our analysis treats SCM as a 
trusted backend abstraction and does not re-prove Nimble's internal safety 
or liveness properties. Thus, our arguments about reading the latest state 
and appending entries in order rely on the linearizability provided 
by the Nimble-backed SCM instantiation. For any ledger $\ell$, the 
recorded sequence identifiers $(id)$ are strictly monotonic and gap-free, 
and a \textsc{Commit} operation succeeds only if the submitted identifier 
is exactly one greater than the last committed identifier for $\ell$.
SCM issues a receipt (a digital signature) only for entries that 
have been appended to the ledger. The SCM signing key $SK_{SCM}$ is 
securely confined to attested SCM instances.

\item (A2) Signature Unforgeability.
Signatures generated by SCM under its private key $SK_{SCM}$ are 
unforgeable. An adversary cannot produce a valid signature $\sigma$ 
for any message that a legitimate SCM instance has not signed.

\item (A3) Hash Function Security.
The cryptographic hash function $\mathcal{H}$ is collision-resistant.
It is computationally infeasible to find two distinct inputs $X \neq X'$
such that $\mathcal{H}(X) = \mathcal{H}(X')$.

\item (A4) Trusted CG Execution and Non-Bypassability.
The Context Guard (CG) executes faithfully within its TEE boundary,
and its verification and update logic cannot be bypassed. Specifically, 
the agent proceeds to planning and execution only after CG has 
successfully verified the contextual state. Furthermore, CG finalizes 
any local contextual state update only after obtaining a valid receipt 
from SCM.

\item (A5) Attested and Integrity-Protected SCM Access.
For a fixed ledger $\ell$, CG authenticates the SCM endpoint via remote 
attestation and pins $PK_{SCM}$ for proof and receipt verification. All 
communication between CG and SCM is integrity-protected (e.g., via TLS),
preventing on-path tampering. Thus, CG receives only authenticated SCM 
proofs and receipts for local state verification and update operations.

\item (A6) Availability of Local Audit and Recovery Artifacts.
Both the audit and recovery guarantees of Historical Traceability require
the availability of the local ledger and the corresponding snapshots:
the ledger anchors the ordered transition history and retains the
state authorization proof, while the snapshots provide the contents
needed for audit and for restoring a known-good prior state. However,
TEE sealing provides confidentiality and integrity against tampering
but does not guarantee availability: an adversary may delete sealed 
local artifacts and thereby disrupt audit and recovery.

\end{itemize}

\subsection{Contextual State Continuity}

Contextual State Continuity (\cref{def:csc}) requires that, at every 
planning and execution cycle, the agent operates only on the latest 
faithfully evolved contextual state. This property is realized in 
\agentsystem by cryptographically binding the local contextual state 
to the latest SCM-authorized state recorded in SCM's linearizable, 
append-only ledger. Consequently, any out-of-band tampering (e.g.,
modification or rollback) is detected before it can influence the 
agent's planning or execution.

\begin{lemma}[Contextual State Integrity and Freshness]
\label{lem:continuity}
Fix a ledger $\ell$ maintained by SCM. If \textsc{VerifyState} 
(\cref{alg:verify}) returns \textit{success}, then the contextual state 
accepted by CG for release to the agent satisfies the following properties:

\begin{enumerate}
    \item \emph{Integrity.}
    The accepted contextual state is a faithfully evolved contextual 
    state for ledger $\ell$.

    \item \emph{Freshness.}
    The accepted contextual state is not a stale prior state, but the 
    latest state for ledger $\ell$.
\end{enumerate}

Moreover, any attempt to make CG proceed on an out-of-band tampered 
contextual state causes \textsc{VerifyState} to return \textit{failure} 
or abort, and by Assumption~(A4), the agent does not proceed to planning 
or execution on such a state.
\end{lemma}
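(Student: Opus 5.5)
We argue by tracing what a \textit{success} return from \textsc{VerifyState} certifies, then reducing each claimed property to one of Assumptions (A1)--(A5).

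\emph{What success certifies.} Suppose \textsc{VerifyState}$(\ell,id,h)$ returns \textit{success}. By \cref{alg:verify}, CG has obtained a proof $\pi=\langle \ell_{scm},id_{scm},h_{scm},n_{scm},\sigma\rangle$ such that $n_{scm}=n_r$ for a fresh nonce $n_r$, $\sigma$ verifies under the pinned key $PK_{SCM}$, and $(\ell_{scm},id_{scm},h_{scm})=(\ell,id,h)$. By (A5) the key is that of an attested SCM and the channel is integrity-protected. By (A2) no adversary could have produced $\sigma$, so a legitimate SCM instance signed $(\ell,id,h,n_r)$. Since $n_r$ was generated after CG issued the request, this signature was created by a \textsc{GetLatestState} invocation that lies in the current request's invocation--response window. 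It is therefore not a replayed old proof. By (A1) and linearizability, $(id,h)$ was the tail of $\mathcal{L}[\ell]$ at some linearization point inside that window.

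\emph{Integrity.} We show by induction over the ledger that every entry $(j,h_j)$ of $\mathcal{L}[\ell]$ is the digest of a faithfully evolved state. For the base case, SCM accepts $(0,h_0)$ only through \textsc{Commit} with $id=0$, which CG calls only after \textsc{ApprovePolicy} in \cref{alg:init}. For the inductive step, an entry $(j+1,h_{j+1})$ is appended only by a \textsc{Commit} whose identifier is exactly $j+1$ (A1). CG issues such a commit only from \textsc{UpdateState}, after a successful verification at step $j$. The digest it commits is $h_{next}=\mathcal{H}(C_{next})$, computed by the deterministic \textsc{PreviewApply} of the realized actions applied to the verified $C_j$ (A4). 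Hence $h_{j+1}$ is the digest of the honest successor state. The tail $h$ is therefore the digest of some faithfully evolved $C^\star$. Because the locally recomputed digest $\mathcal{H}(C)$ equals $h=\mathcal{H}(C^\star)$, collision resistance (A3) gives $C=C^\star$, except with negligible probability.

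\emph{Freshness and the tampering clause.} Freshness follows from the linearization argument above: at the read's linearization point, $(id,h)$ is the last entry of $\mathcal{L}[\ell]$. Under (A4), CG alone appends to $\ell$, and it does so only after the current query's verification, so no later honest entry exists at the moment the state is released. We now treat tampering as a case split on how the local state $C'$ deviates from the latest authorized $C^\star$.
\begin{enumerate}
\item \emph{Modification or injection} ($C'\neq C^\star$). By (A3), $\mathcal{H}(C')\neq h_{scm}$, so the check returns \textit{failure}.
\item \emph{Rollback to an older valid $C_k$ with $k<id_{scm}$.} Either $\mathcal{H}(C_k)\neq h_{scm}$, or, in the rare case where an honest state recurs, the stale session identifier differs from $id_{scm}$. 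Either mismatch yields \textit{failure}.
\item \emph{Forged or replayed proofs.} Signature verification or the nonce check aborts.
\item \emph{Blocking SCM.} \textsc{GetLatestState} fails, and the procedure aborts.
\end{enumerate}
In every case, (A4) ensures the agent does not plan or act on the tampered state.

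The main obstacle is making the freshness claim and the rollback case precise. The argument must combine nonce binding with linearizability to exclude a replayed proof from an earlier epoch. It must also handle the case in which a stale state repeats a digest, since then it is the $id$ comparison, not the hash comparison, that catches the attack. That comparison is only meaningful if $id$ comes from CG's sealed local ledger rather than the untrusted host, so the proof should make this dependence on (A4) explicit.
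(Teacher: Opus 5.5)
Your proof is correct and has the same skeleton as the paper's. It unpacks the two success conditions of \textsc{VerifyState}, uses (A2) and (A5) to authenticate the proof tuple, uses (A3) to identify the local state with the authorized one, uses (A1) plus the nonce for freshness, and uses (A4) for non-bypassability. Your tampering case split matches the paper's prose case by case.

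The real difference is in Integrity. The paper just lets $C$ be ``the SCM-authorized contextual state for that step'' and treats being SCM-authorized as being faithfully evolved, so integrity reduces to one collision-resistance step. You instead prove, by induction on \textsc{Commit} calls, that every entry of $\mathcal{L}[\ell]$ is the digest of a faithfully evolved state: the base case goes through \textsc{ApprovePolicy}, and the step goes through \textsc{UpdateState} after a successful verification. This actually justifies the word ``faithfully,'' and is close in spirit to the induction the paper only carries out in \cref{cor:csc}.

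The price is that your induction needs every entry of $\mathcal{L}[\ell]$ to come from CG. (A4) does not say this, and \cref{alg:scm} does not authenticate committers. You can drop this assumption by using the $id$ check you already emphasize. If $id$ is read from CG's sealed local ledger, it is an index at which CG's own \textsc{Commit} succeeded. Identifiers are gap-free and the ledger is append-only (A1), so a tail with $id_{scm}=id$ must be CG's own entry, and the induction can run over CG's commits alone.

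Conversely, the repeated-digest case you single out as the main obstacle is harmless. If a rolled-back $C_k$ satisfies $\mathcal{H}(C_k)=h_{scm}$, then by (A3) it coincides in content with the latest authorized state, so accepting it violates neither integrity nor freshness. The paper handles rollback simply by noting that either $id$ or $h$ must mismatch.
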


The proof of \cref{lem:continuity} is deferred to~\cref{sec:proofs} due
to space constraints. \cref{lem:continuity} establishes the single-step
integrity and freshness guarantee. We now extend this single-step guarantee
to a sequential agreement corollary by comparing the real system with the
ideal reference system (as defined in~\cref{sec:sproperties}).

\begin{corollary}[Agreement of Real and Ideal Contextual State Sequences]
\label{cor:csc}
Fix an approved initial contextual state $C_0$, a query sequence
$Q = (q_1, q_2, \ldots)$, and a resolution $\Omega$ of benign
nondeterministic choices. Let
\[
\mathcal{L}^{*} = \big((0,h_0), (1,h_1), (2,h_2), \ldots\big)
\]
be the reference ledger induced by the ideal reference system from
$C_0$ under $Q$ and $\Omega$. Consider a real execution of~\cref{alg:workflow}
from the same approved initial contextual state $C_0$ under the same
$Q$ and $\Omega$, in which each invocation of \textsc{VerifyState} and
each subsequent invocation of \textsc{UpdateState} returns \textit{success}.
Then, at every planning-and-execution cycle $t+1$, the contextual state
$\hat{C}_t$ released by CG induces the same pair $(id, h)$ as in the
reference ledger, namely
\[
(\hat{id}_t, \mathcal{H}(\hat{C}_t)) = (t, h_t).
\]
\end{corollary}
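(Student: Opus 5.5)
We argue by induction on $t$, with invariant $P(t)$: at the start of cycle $t+1$, the SCM ledger $\mathcal{L}[\ell]$ has tail $(t,h_t)$, CG's session state resolves to $(\ell,t)$, and the contextual state $\hat{C}_t$ released to the agent satisfies $(\hat{id}_t,\mathcal{H}(\hat{C}_t))=(t,h_t)$.

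\emph{Base case.} The first run of \cref{alg:workflow} calls \textsc{InitializeState} on the approved $C_0$ and sets $id=0$. The \textsc{Commit}$(\ell,0,h_0)$ succeeds with a verified receipt. By (A1), the fresh ledger is $[(0,h_0)]$, and by (A3), $h_0=\mathcal{H}(C_0)$ is exactly the first entry of $\mathcal{L}^{*}$. Since CG releases $C_0$ itself, $P(0)$ holds.

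\emph{Inductive step.} Assume $P(t)$. During cycle $t+1$, the agent plans on $\hat{C}_t$ with query $q_{t+1}$ under the same resolution $\Omega$ as the ideal system. By (A4), CG does this faithfully, so the actions $\mathcal{A}$ and the deterministic \textsc{PreviewApply} produce $C_{next}$ equal to the ideal $C_{t+1}$. The ideal system applies the same deterministic transition to the same inputs, because $\hat{C}_t=C_t$ up to collisions of $\mathcal{H}$, which (A3) rules out. By hypothesis \textsc{UpdateState} succeeds, so \textsc{Commit}$(\ell,t+1,h_{t+1})$ succeeded and its receipt verified under the pinned key (A2, A5). By (A1), the SCM tail is then $(t+1,h_{t+1})$, and CG's resolved session state advances to $(\ell,t+1)$.

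At the next query, CG recomputes $h=\mathcal{H}(C)$ from the environment, which the adversary may have altered. The hypothesis says \textsc{VerifyState}$(\ell,t+1,h)$ returns \textit{success}. We then invoke \cref{lem:continuity}. Its freshness part says the released state is the latest for $\ell$, namely the tail $(t+1,h_{t+1})$. Its integrity part, via the equality check $h=h_{scm}$ together with (A2) and (A5), gives $\mathcal{H}(\hat{C}_{t+1})=h_{t+1}$ and $\hat{id}_{t+1}=t+1$. This establishes $P(t+1)$.

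\emph{Main obstacle.} The delicate step is arguing that the real system's transition from $\hat{C}_t$ produces exactly $h_{t+1}$, which requires $\hat{C}_t=C_t$ as objects and not merely equal digests. I would handle this by noting that digest equality plus (A3) gives $\hat{C}_t=C_t$ except with negligible probability. The ideal and real transitions are then the same deterministic function of $(C_t,q_{t+1},\Omega)$, because \textsc{PreviewApply} is deterministic and $\Omega$ fixes the LLM and tool outputs.

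A secondary subtlety is interleaving: concurrent sessions or a crash between \textsc{Commit} and the local append could misalign $id$. Two facts close this gap. First, \textsc{Commit}'s gap-free identifier rule in (A1) means no other successful commit can occupy slot $t+1$. Second, the pending-record recovery path completes the append before resumption, so \textsc{ResolveSessionState} still returns $(\ell,t+1)$.
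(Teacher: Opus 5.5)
Your proof is correct and follows the paper's induction. The base case uses the shared $C_0$. The inductive step combines digest equality with (A3) to identify $\hat{C}_t$ with $C_t$, uses the common $q_{t+1}$ and $\Omega$ to reach the same $h_{t+1}$, and relies on the successful \textsc{UpdateState} to advance to $(t+1,h_{t+1})$. Your richer invariant (SCM tail plus resolved session state) and your explicit re-application of \cref{lem:continuity} at the next cycle spell out what the paper compresses into ``the real system advances to the pair $(t+1,h_{t+1})$''. One minor point: (A3) is not needed in your base case, since $h_0=\mathcal{H}(C_0)$ holds by definition.
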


\begin{proof}
The claim follows by induction on $t$. The base case holds because both
systems start from the same approved initial contextual state $C_0$,
hence from the same initial pair $(0, h_0)$. For the inductive step,
assume the released real pair matches the pair in the reference ledger
at step $t$. By \cref{lem:continuity}, successful \textsc{VerifyState}
ensures that the real system proceeds from the latest authorized
contextual state at step $t$. Since, by the inductive hypothesis, the
released real pair at step $t$ matches the pair in the reference ledger,
the digest of the real contextual state at step $t$ matches the digest
of the ideal contextual state at step $t$; by Assumption~(A3), these
contextual states are therefore the same except with negligible collision
probability. Under the same query $q_{t+1}$ and the same benign
nondeterminism resolution $\Omega$, the real and ideal systems therefore
induce the same next digest $h_{t+1}$. Since \textsc{UpdateState} succeeds, 
the real system advances to the pair $(t+1, h_{t+1})$, which is exactly 
the pair recorded in the reference ledger at step $t+1$. The claim 
therefore follows by induction.
\end{proof}

\cref{lem:continuity} and~\cref{cor:csc} establish the protocol-level 
guarantee underlying Contextual State Continuity: under Assumptions
(A1)--(A5), \agentsystem ensures that planning and execution proceed 
only on the latest faithfully evolved contextual state, while any 
out-of-band tampering is detected and blocks further processing.

\subsection{Historical Traceability}

Historical Traceability provides a conditional guarantee for post-hoc
audit and recovery, thereby addressing in-band semantic abuse that
Contextual State Continuity does not prevent. As captured by
Assumption~(A6), both audit (i.e., the ability to inspect and verify
the transition history) and recovery depend on the availability of the
local ledger and the corresponding snapshots: the ledger provides the
authorized, ordered sequence of state digests together with the
corresponding SCM receipts, while the snapshots provide the contextual
state contents needed for audit and recovery.

\begin{lemma}[Authorized, Logged, and Ordered Effective Transitions]
\label{lem:trace}
Fix a ledger $\ell$ maintained by SCM. Consider an execution of 
\cref{alg:workflow} in which \textsc{UpdateState} is invoked after 
a successful \textsc{VerifyState} and returns \textit{success}, 
advancing the local sequence identifier from $id$ to $id+1$. 
Then the following statements hold:
\begin{enumerate}
    \item There exists a valid SCM receipt $\sigma_{commit}$ for
    the tuple $(\ell, id+1, h_{next})$, where $h_{next}$ is the
    digest of the SCM-authorized next contextual state.

    \item The local ledger $\mathcal{L}_{\text{local}}$ contains a record 
    $(id+1, h_{next},\allowbreak \sigma_{commit},\allowbreak \text{snapshot\_ref})$
    corresponding to that authorized transition for ledger $\ell$. The 
    $\text{snapshot\_ref}$ field is present in every record; it either 
    contains a reference to a snapshot (if the snapshot policy created 
    a checkpoint) or is $\text{NULL}$ (otherwise).

    \item The locally recorded transitions for ledger $\ell$ preserve
    the order of the corresponding SCM-authorized transitions.
\end{enumerate}
\end{lemma}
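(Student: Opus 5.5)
The argument traces the control flow of \textsc{UpdateState} (\cref{alg:update}) on a successful return and applies Assumptions (A1), (A2), (A4) and (A5) at each step. Each of the three claims follows from a different step of the algorithm, so we treat them one at a time.

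For claim (1), \textsc{UpdateState} reaches \textit{success} only after step 2 has run. There \textsc{SCM.Commit}$(\ell, id+1, h_{next})$ returned $\textit{status}=\text{True}$, and \textsc{VerifySig}$(PK_{SCM}, (\ell, id+1, h_{next}), \sigma_{commit})$ accepted. Otherwise the function would have returned \textit{failure} or aborted.

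By (A5), $PK_{SCM}$ is the pinned key of an attested SCM, and the channel is integrity-protected. By (A2), a valid signature under this key implies that a legitimate SCM instance actually signed $(\ell, id+1, h_{next})$. By (A1), SCM signs only entries it has appended. Hence $(id+1, h_{next})$ is in $\mathcal{L}[\ell]$, and $\sigma_{commit}$ is a valid receipt for it. Since $h_{next}$ is the digest produced by \textsc{PreviewApply} and committed, it is the digest of the SCM-authorized next state.

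For claim (2), success also requires steps 3--6 to complete. Step 6 unseals $\mathcal{L}_{\text{local}}$, appends $(id+1, h_{next}, \sigma_{commit}, \text{snapshot\_ref})$ to $\mathcal{L}_{\text{local}}[\ell]$, and seals the result. By (A4), CG runs these steps faithfully, and the append-and-seal is atomic. The field $\text{snapshot\_ref}$ is set to NULL in step 5 and replaced only when \textsc{ShouldTakeSnapshot} holds and \textsc{SealSnapshot} returns a non-NULL reference; a NULL result at that point aborts. So the field is always present and is either a valid reference or NULL.

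For the crash case, we would note that the sealed pending record $\rho^{\mathrm{pend}}_{id+1}$ lets recovery complete the same append. The claim therefore still holds once the invocation is counted as successful.

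For claim (3), we argue by induction over successive successful updates on $\ell$. The update for $id+1$ follows a successful \textsc{VerifyState}, which by \cref{lem:continuity} certifies that $id$ is SCM's latest identifier for $\ell$. By (A1), \textsc{Commit} accepts only the identifier $id_{last}+1$, so the authorized identifiers form the gap-free sequence $0,1,2,\dots$ in linearization order.

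Locally, CG appends each record only after its own commit succeeds. Each query runs \textsc{VerifyState} before \textsc{UpdateState}, so the local appends occur in the same increasing-identifier order. The base record $(0,h_0,\sigma_0,\cdot)$ comes from \cref{alg:init}. Thus the $i$-th local record corresponds to SCM entry $i$.

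The main obstacle is claim (3) under concurrency or crashes. Two CG invocations, or a retry after a failed \textsc{Finalize}, might interleave commits and local appends. We plan to handle this in two parts. First, linearizability (A1) means at most one commit per identifier can succeed, so there is no fork. Second, a retry reuses the sealed pending record for the same $id+1$ instead of issuing a new commit. Together these mean local appends cannot occur out of order or be duplicated.
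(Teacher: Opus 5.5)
Your proposal is correct and follows the paper's proof essentially step for step. Claim (1) comes from the successful \textsc{Commit} together with receipt verification under (A1)--(A2), where you also cite (A5), which the paper leaves implicit; claim (2) comes from the append-and-seal in step~6 of \cref{alg:update}; and claim (3) comes from (A1)'s consecutive identifiers being carried over into the local records. Your closing concurrency paragraph goes beyond the paper, which implicitly assumes queries on a ledger are processed one at a time, and as stated it does not settle that case: at-most-one commit per identifier and pending-record reuse do not by themselves stop a second invocation from verifying against SCM's new tail and appending $id+2$ before the first invocation appends $id+1$, so you would need to assume that CG serializes per-ledger processing or resolves $id$ from the sealed local-ledger tail.
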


The proof of \cref{lem:trace} is deferred to~\cref{sec:proofs} due 
to space constraints.

\begin{corollary}[Conditional Auditability and Recovery]
\label{cor:audit}
Under Assumptions~(A1),~(A2), and~(A6), together with~\cref{lem:trace},
for any ledger $\ell$, each locally recorded transition for $\ell$ is
SCM-authorized, logged, and ordered.

\paragraph{Auditability.}
The local ledger provides an ordered history of contextual state evolution.
By applying an external audit policy to this history together with the
corresponding snapshots, an administrator can determine whether the
corresponding contextual state is benign and identify the first malicious
step $k$.

\paragraph{Recovery.}
Recovery to a known-good state is possible if there exists a largest index 
$j < k$ such that the corresponding local ledger entry and snapshot are 
available, the SCM receipt $\sigma_j$ recorded in that entry verifies for 
the corresponding digest $h_j$ of step $j$, and the snapshot contents
recompute to the same digest. In that case, CG restores the contextual 
state $C_j$ from the snapshot, calls $\textsc{InitializeState}(C_j, h_j)$,
and re-anchors the restored state in a fresh ledger.
\end{corollary}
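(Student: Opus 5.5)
The plan is to derive the corollary directly from \cref{lem:trace} and Assumptions~(A1), (A2), and~(A6), proving its three parts in turn.

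\textbf{Opening claim.} Fix $\ell$. I will induct over the successful \textsc{UpdateState} invocations for $\ell$, in execution order.
\begin{itemize}
\item The base record is written by \textsc{InitializeState} (\cref{alg:init}). It is persisted only after \textsc{SCM.Commit}$(\ell,0,h_0)$ returns a receipt $\sigma_0$ that passes \textsc{VerifySig}.
\item Each later record is appended by a successful \textsc{UpdateState}. By \cref{lem:trace}(1)--(2), that record carries a valid receipt for $(\ell, id+1, h_{next})$.
\item By (A2), a valid receipt cannot be forged. By (A1), SCM signs only entries it has actually appended. So every logged tuple is an SCM-ledger entry, which is what ``authorized'' means here.
\item Ordering follows from \cref{lem:trace}(3) together with the gap-free monotonicity in (A1). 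The local identifiers therefore form the prefix $0,1,\dots$ of SCM's sequence for $\ell$.
\item (A6) is used only to say that the sealed local ledger is present when it is consulted. Sealing prevents tampering, so any record that is present is exactly the one CG wrote.
\end{itemize}

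\textbf{Auditability.} This part is essentially definitional. From the opening claim, the local ledger is a verified total order $(0,h_0),\dots$ of authorized digests. For each step that has a snapshot, the administrator checks that $\mathcal{H}$(snapshot) equals the logged digest. By (A3), the snapshot then equals the contextual state that was actually authorized at that step. Running the external audit policy over this ordered sequence and taking the minimum flagged index gives the first malicious step $k$.

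I would state clearly that the correctness of the audit verdict is the policy's responsibility. The protocol guarantees only that the policy inspects the genuine, correctly ordered history. This explicit conditioning is what makes the property ``conditional''.

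\textbf{Recovery.} Take the largest $j<k$ that meets the stated availability and verification conditions.
\begin{itemize}
\item By (A2), $\sigma_j$ verifying for $(\ell,j,h_j)$ means $h_j$ was authorized at step $j$.
\item By (A3), the snapshot contents that recompute to $h_j$ are, except with negligible probability, the authorized state $C_j$.
\item Since $j<k$, $C_j$ precedes the first malicious transition and is therefore known-good.
\item The recovery path calls \textsc{InitializeState}$(C_j,h_j)$ with this verification serving as \textsc{ApprovePolicy}. This starts a fresh label $\ell'$, and \textsc{Commit} with $id=0$ succeeds because $\ell'$ is new; collisions are checked by both CG and SCM.
\item The original ledger $\ell$ is left untouched and remains available for later audit.
\end{itemize}

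\textbf{Main obstacle.} The delicate part is the gap between ``recorded'' and ``effective'' transitions when a crash interrupts \textsc{UpdateState}. A commit may have succeeded at SCM while the local append is still only a pending record, so the local ledger could lag SCM by one entry. I would handle this with the pending-record recovery step: before any audit, the sealed pending record is replayed to complete the append. After that, the local ledger is an exact prefix of SCM's ledger for $\ell$. If the pending record has been deleted, the case falls outside (A6), and the corollary makes no claim there.
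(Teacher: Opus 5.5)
Your proposal is correct and follows the paper's implicit argument. The paper gives no separate proof: it treats the corollary as a direct consequence of \cref{lem:trace} with (A1), (A2), (A6), leaves audit to an external policy, and realizes recovery as the receipt-and-digest check acting as \textsc{ApprovePolicy} for \textsc{InitializeState} on a fresh ledger, exactly as you do. Your additions are sound refinements, not a different route: you handle the step-$0$ record written by \textsc{InitializeState}, which \cref{lem:trace} does not literally cover; you invoke (A3) to identify the snapshot contents with $C_j$; and you discuss pending-record replay, which is not needed, since the corollary quantifies only over locally recorded transitions and a lagging local ledger is still a prefix of SCM's.
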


Together, \cref{lem:trace} and~\cref{cor:audit} establish the protocol-level
guarantee underlying Historical Traceability: under Assumptions (A1), (A2),
and~(A6), finalized contextual state transitions remain auditable, and
recovery to a known-good state is feasible, provided the local ledger
and the relevant snapshots are available.

\section{\agentsystem~Evaluation}
\label{sec:eval}

In this section, we implement and evaluate the \agentsystem protocol
proposed in~\cref{sec:design}. We first explain the evaluation methodology 
and then present the experimental setup and results.

\subsection{On Evaluation Methodology}
\label{sec:eval-methodology}

\agentsystem is a verification-based defense: whether an out-of-band tampering 
attempt is detected is a deterministic consequence under Assumptions~(A1)--(A5) 
in~\cref{sec:analysis-assumptions}, not a statistical property of a learned detector.
Under~\cref{lem:continuity}, any modification to the protected contextual state 
that deviates from the SCM-authorized digest causes \textsc{VerifyState} to fail.
Accordingly, under Assumptions~(A1)--(A5), in-scope out-of-band tampering 
(e.g., Scenarios I and II in~\cref{sec:threat}) is deterministically rejected by 
\textsc{VerifyState}, rather than characterized by a statistical detection rate. 
Scenario III (i.e., in-band semantic abuse) is outside the scope of Contextual 
State Continuity and is instead addressed by Historical Traceability via conditional 
post-hoc audit and recovery. We therefore do not report attack-specific detection rates, 
as in prior state-continuity work (e.g., Memoir~\cite{parno2011memoir}, 
ROTE~\cite{matetic2017rote}, and Nimble~\cite{angel2023nimble}), and instead focus the 
empirical evaluation on the performance cost of enforcing Contextual State Continuity.

\subsection{Experimental Setup}
\label{sec:eval:impl}

Our experiments run on multiple machines equipped with Intel Xeon 6982P-C CPUs that 
support Intel TDX, a VM-based Trusted Execution Environment (TEE). All machines are 
``ecs.g9i.xlarge'' instances rented from Alibaba Cloud. We enable confidential VM mode 
(i.e., TDX) on three of them, which we denote as ``TDX1'', ``TDX2'', and ``TDX3''.
We additionally use two instances of the same type with confidential VM mode disabled,
which we denote as ``NT1'' and ``NT2''. All machines are deployed in the Alibaba Cloud 
Beijing region and communicate over an intra-datacenter LAN.

\paragraph{\agentsystem.}
As shown in~\cref{fig:elephantagent-mcp}, our prototype instantiates \agentsystem 
over MCP with two augmentation components: a Context Guard (CG) and a State 
Continuity Module (SCM). CG is integrated into the MCP Host and runs inside a TEE,
so its state-verification and state-update logic execute correctly within the trusted 
boundary. SCM is implemented independently of the MCP components as a Nimble-backed 
replicated ledger service across TDX-protected VMs and uses TDX-adapted remote 
attestation for authentication. Our design therefore reuses the Nimble~\cite{angel2023nimble} 
ledger abstraction and guarantees. SCM follows the Rust implementation of Nimble.
The remaining components of \agentsystem are implemented in Python.

The agent's planning is powered by an external LLM API; in our prototype, we use 
\texttt{qwen-flash} served by Alibaba Cloud Model Studio. We choose this model 
because the overhead of contextual state maintenance is at the millisecond level: 
slower models would cause LLM processing time to dominate the measurement. Using 
a latency-optimized Qwen model makes the incremental cost of CG and SCM easier to 
observe. To avoid confounding LLM processing time with network latency, we use 
the server-side request processing time \texttt{req-cost-time} reported by the 
Qwen API in place of client-observed \texttt{llm\_round\_trip\_time} (i.e., the 
duration between the start and end of an LLM API request). That is, we use the 
server-side compute time rather than the end-to-end request duration observed on 
the host.

The host-side runtime combines an agent runtime, a Context Guard, and an MCP client. 
Our implementation supports remote \texttt{Streamable HTTP} transport for MCP Host--MCP 
Server communication. We implement 19 MCP tools in total: 1 generic artifact-write tool, 
6 tools for Email Triage, 8 tools for Code Bug Fix, and 4 tools for Calendar Scheduling.
At runtime, the agent exposes only the task-relevant subset of tools, uses task-specific 
prompts, and produces structured outputs, such as triage reports, fix reports and patches, 
and scheduling artifacts.

We evaluate two modes: a baseline mode and a confidential mode. In the confidential mode, 
we evaluate \agentsystem and deploy its components across multiple machines. The MCP Host 
runs on ``TDX1'' (including the Agent, MCP Client, and Context Guard), while the MCP Server 
runs on ``NT1''. The SCM runs on ``TDX2'' and ``TDX3'', with three nodes as separate 
processes on ``TDX2'' and two nodes as separate processes on ``TDX3''. In the baseline mode, 
we evaluate \naivesystem, which is \agentsystem without CG and SCM, i.e., a conventional MCP 
workflow without contextual state protection. The MCP Host runs on ``NT2'', while the MCP Server 
still runs on ``NT1''. This comparison measures the end-to-end overhead of \agentsystem relative 
to a conventional MCP workflow, including both contextual state maintenance and the use of 
TDX-protected VMs.

We defer deployment details for CG and SCM, including SCM failure handling, SCM endpoint 
authentication, and private signing-key management, to~\cref{sec:deploy-details} due to 
space constraints.

\subsection{Evaluation}
\label{sec:eval:exp}

We evaluate the overhead of \agentsystem across four aspects:
\begin{enumerate}
    \item system initialization (e.g., remote attestation, SCM setup, and initial state anchoring);
    \item data loading (e.g., email thread loading);
    \item state verification (e.g., local digest recomputation, SCM communication, and proof verification);
    \item state update (e.g., commit latency and local persistence of memory updates).
\end{enumerate}
These measurements are tailored to the deployment of \agentsystem.
For example, we include state verification and state update
to measure the overhead of contextual state maintenance.

\paragraph{Tasks.}
We use three agent workflows to evaluate \agentsystem. These workflows are 
chosen to reflect realistic agent tasks while remaining stable across runs.
Within each run, each successful query appends one transcript event and may 
update the artifact through intermediate results or final outputs.
\begin{itemize}
    \item Email Triage.
    In each run, the agent loads the thread and produces a structured
    triage artifact containing a summary, action items, and a draft reply.

    \item Code Bug Fix.
    We evaluate on a repository from QuixBugs~\cite{quixbugs_repo},
    with one case specification. In each run, the agent processes 
    the predefined issue ID and performs repository read/write, 
    fix report writing, and patch export.

    \item Calendar Scheduling.
    We evaluate scheduling on calendar data from CalDAVTester~\cite{caldavtester_repo}.
    In each run, the agent loads events, computes candidate time slots under 
    fixed scheduling parameters, and finalizes a structured scheduling artifact.
\end{itemize}

\paragraph{Overhead Summary.}
We compare \naivesystem in the baseline mode with \agentsystem in the 
confidential mode to measure the end-to-end overhead introduced by
contextual state maintenance and TDX-protected VMs in \agentsystem.

Across the three tasks, the mean initialization time is 219.15\,ms for 
\naivesystem and 1800.46\,ms for \agentsystem, corresponding to an 
8.22$\times$ initialization overhead. This startup cost is primarily 
due to remote attestation and SCM setup. However, it is a one-time cost 
that is amortized over subsequent queries.

\cref{tab:elephantagent-overhead} lists the time breakdown for data loading, 
state verification/update, and task processing across the three tasks. 
The per-query overhead remains small across tasks: verification averages 
4.23--4.33\,ms, update averages 5.25--5.39\,ms, and the additional data loading 
time of \agentsystem relative to \naivesystem remains below 1\,ms across tasks.
The resulting end-to-end overhead ranges from 1.02$\times$ to 1.04$\times$ across 
the three tasks.

We intentionally choose small tasks so that the millisecond-scale CG/SCM costs 
are not overshadowed by LLM-driven task processing. For more conventional agent 
workloads, total runtime would likely be more dominated by task processing, so 
the relative overhead of \agentsystem is expected to be even smaller. This suggests 
that \agentsystem can be integrated into existing agentic systems without introducing 
noticeable overhead.

\begin{table*}[t]
\centering
\small
\begin{tabular}{llrrrrrc}
\hline
Task & System & Load (ms) & Verify (ms) & Task (ms) & Update (ms) & Total (ms) & Overhead \\
\hline
Email Triage & \naivesystem & 0.36 & -- & 2249.07 & -- & 2249.43 & \multirow{2}{*}{1.03} \\
 & \agentsystem & 0.52 & 4.23 & 2313.47 & 5.30 & 2323.52 & \\
Code Bug Fix & \naivesystem & 1.23 & -- & 2357.66 & -- & 2358.89 & \multirow{2}{*}{1.02} \\
 & \agentsystem & 1.62 & 4.33 & 2400.34 & 5.39 & 2411.68 & \\
Calendar Scheduling & \naivesystem & 0.48 & -- & 2830.35 & -- & 2830.82 & \multirow{2}{*}{1.04} \\
 & \agentsystem & 0.60 & 4.29 & 2940.14 & 5.25 & 2950.28 & \\
\hline
\end{tabular}
\caption{
Task-time breakdown across the three tasks. Each entry reports the mean time 
over 50 valid runs for that task. \naivesystem is decomposed into load and task 
time, while \agentsystem is decomposed into load, verify, task, and update time. 
The overhead is computed as the ratio of the total time of \agentsystem to that 
of \naivesystem.
}
\label{tab:elephantagent-overhead}
\end{table*}

\subsection{Ablation Study: Attested MCP Server}
\label{sec:attested-server}

This ablation is motivated by the fact that server-side execution is also 
security-critical: even if the host-side contextual state is protected, a 
malicious or compromised MCP Server may still execute harmful logic regardless 
of the invocations it receives. Running the MCP Server inside a TDX VM and 
authenticating it via remote attestation hardens the server endpoint. We 
therefore additionally study a separate attested-server mode to evaluate the 
cost of protecting MCP Server integrity with a TDX VM and remote attestation. 
The implementation for this mode is \attestedsystem, which follows the same
architecture as \naivesystem but runs the MCP Server inside a TDX VM and
authenticates the server endpoint via remote attestation during initialization.

In the attested-server mode, the MCP Host runs on ``NT2'', and the MCP Server 
runs on ``TDX1'' inside a TDX VM. We measure the overhead of \attestedsystem 
against \naivesystem over the same three tasks along two aspects: (1) initialization 
time; and (2) MCP Server Tool Invocation Time, defined as the total runtime of 
all remote MCP tool calls. Together, these quantities capture the overhead 
introduced by the attested server during initialization, due to remote attestation,
and during remote tool execution, due to running the MCP Server inside a 
TDX-protected VM. \cref{fig:trusted-server-overhead} presents the measurements 
of MCP Server Tool Invocation Time across the three tasks for \naivesystem and 
\attestedsystem.

\begin{figure}[h]
\centering
\includegraphics[width=0.8\linewidth]{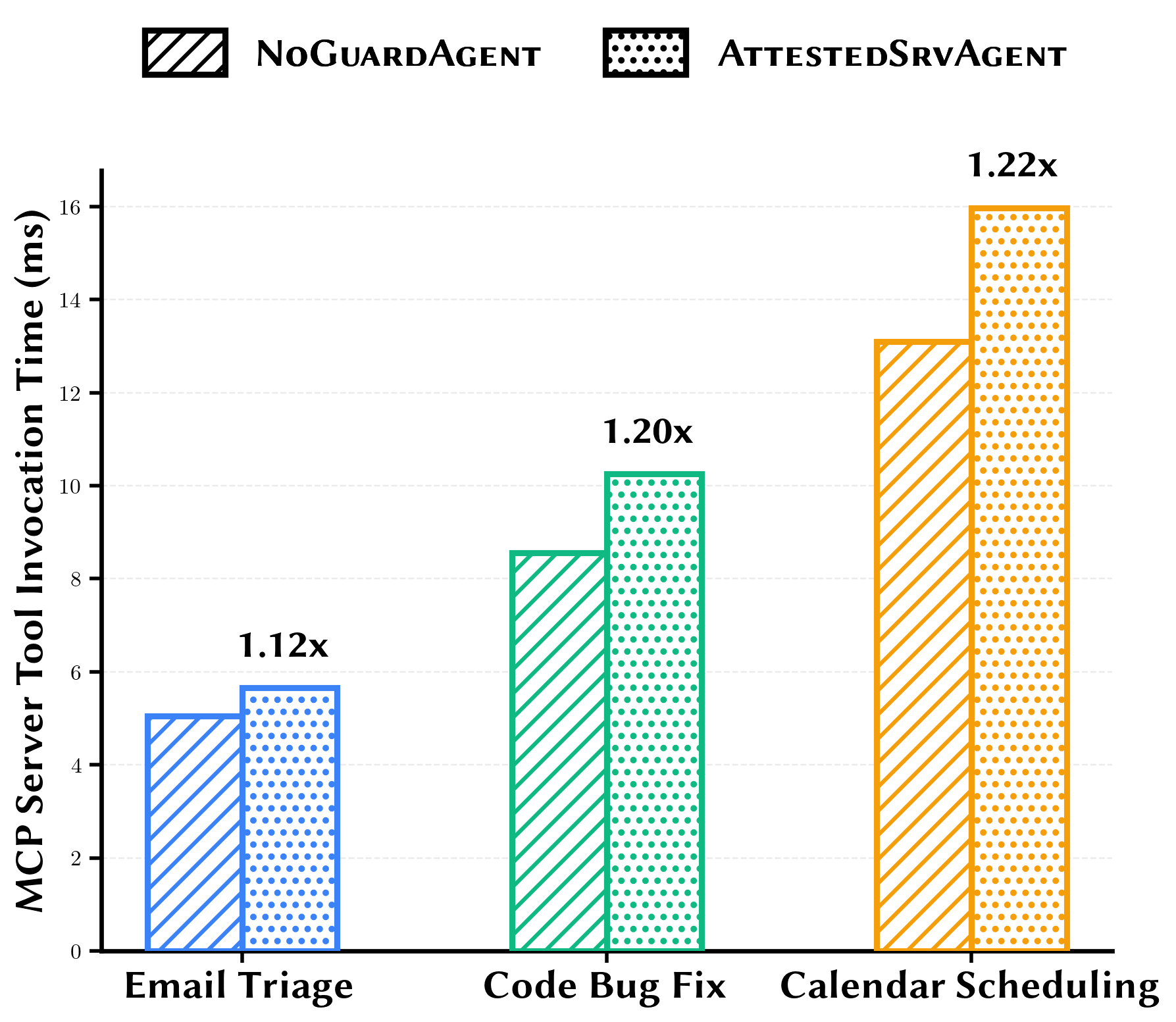}
\caption{
MCP Server Tool Invocation Time across the three tasks for \naivesystem and 
\attestedsystem. The results are averaged over 50 runs.
}
\label{fig:trusted-server-overhead}
\end{figure}

Initialization increases from 219.15\,ms in the baseline mode to 331.66\,ms 
(1.51$\times$) in the attested-server mode. MCP Server Tool Invocation Time 
also increases consistently: for Email Triage, it rises from 5.05\,ms to 5.65\,ms 
(1.12$\times$); for Code Bug Fix, from 8.56\,ms to 10.25\,ms (1.2$\times$); and for 
Calendar Scheduling, from 13.1\,ms to 15.98\,ms (1.22$\times$). These results show 
that running the MCP Server inside a TDX-protected VM and authenticating it via 
remote attestation adds a visible but still small overhead to server-side 
tool execution. We further observe that the relative MCP Server Tool Invocation 
overhead increases with the baseline invocation time across the three tasks
(from 1.12$\times$ to 1.22$\times$), suggesting that TDX-protected VMs impose 
a cost that tends to grow with the amount of in-VM tool execution. Given the 
relatively small overhead, this mode may be appropriate for MCP Servers that 
provide security-critical services (e.g., financial or medical services). 

We include the attested-server mode only in this ablation study for two reasons.
First, it introduces additional TEE hardware cost that practitioners may dislike.
Second, this mode is adapted specifically to MCP: agentic systems based on other 
deployments (e.g., CLI-style deployment) may not require this additional level of 
attestation and protection, because their tools can be local.

We also note that the attested-server mode guarantees that an MCP Server follows 
the agent's invocations and faithfully executes the intended tool with the 
specified parameters, but it does not mitigate in-band semantic abuse caused by 
externally retrieved content. For example, a webpage summarization tool may 
retrieve malicious content from an external website, and the retrieved content 
may then be passed by the tool to the agent and maliciously steer its behavior. 
However, such semantic abuse can be handled through the Historical Traceability 
property of \agentsystem, provided that the relevant local artifacts are available.

\section{Conclusion}
We present \agentsystem, a protocol for securing agentic systems against
contextual state poisoning through state continuity verification.
\agentsystem tracks the contextual state of an agentic system
and enforces that it evolves along a linearizable history
to protect against out-of-band state tampering.
Additionally, \agentsystem supports conditional post-hoc
audit and recovery to handle in-band semantic abuse.
We instantiate \agentsystem over MCP in our prototype 
and demonstrate that contextual state protection 
introduces only very small task overheads.
We also note that it is an add-on protocol that can be applied
to other agentic systems (e.g., a CLI-based agentic system).
\agentsystem is designed to address a long-standing gap 
in agentic systems: their context lacks proper protection.
We hope \agentsystem inspires more secure future agentic system design.

\bibliographystyle{ACM-Reference-Format}
\bibliography{refs}

\newpage
\appendix
\crefalias{section}{appendix}
\crefalias{subsection}{appendix}

\section{Generality of \agentsystem}
\label{sec:generality}

\begin{figure*}[h]
    \centering
    \includegraphics[width=0.9\linewidth]{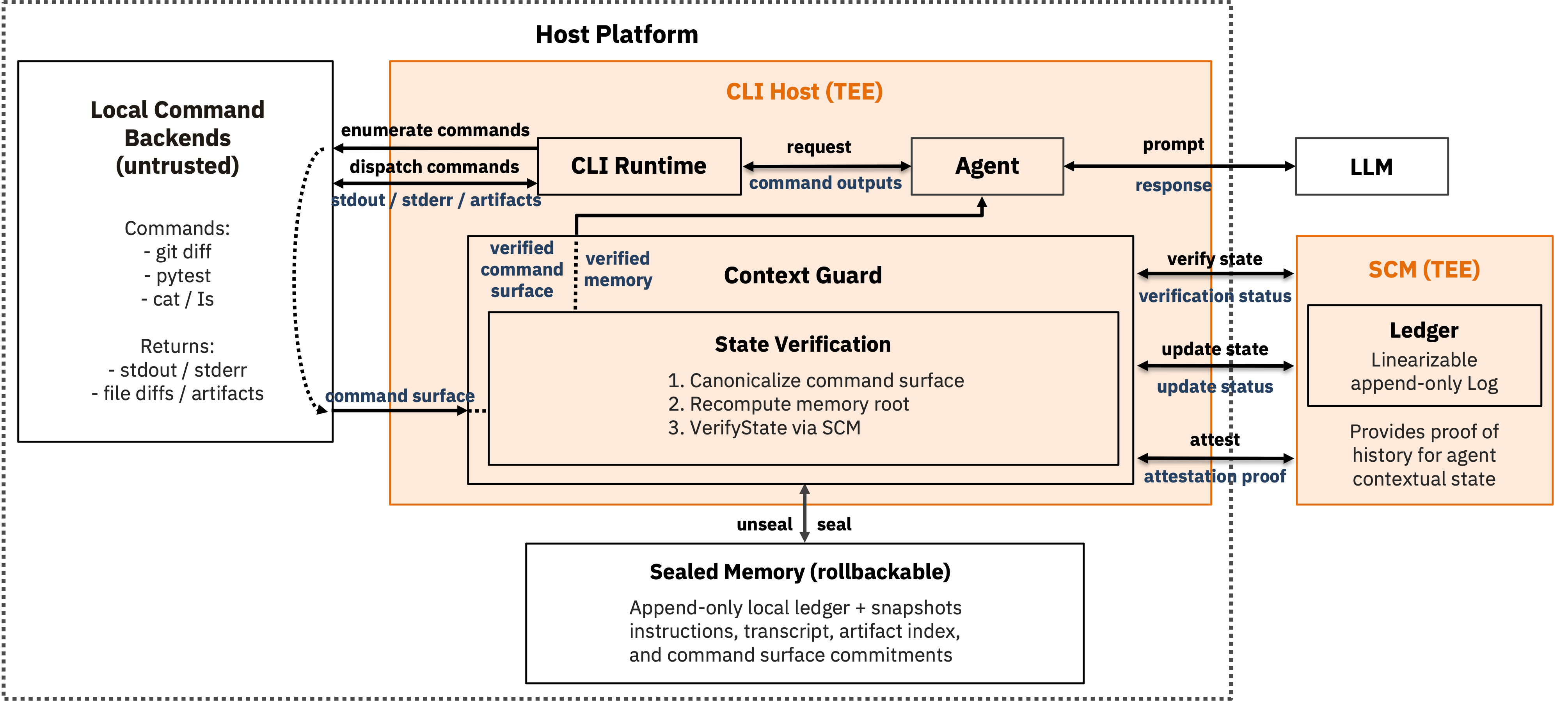}
    \caption{
        \agentsystem overview, instantiated for a CLI-based deployment. Compared 
        with~\cref{fig:elephantagent-mcp}, the local interface changes from an MCP 
        client/server interface to a CLI command-dispatch interface. The Host Platform 
        is untrusted, the CLI Host runs inside a TEE, CG is the enforcement component 
        on the host, and SCM is the remote state maintenance service. The Local Command 
        Backends correspond to MCP Tool Servers, but are typically local commands and 
        utilities on the Host Platform rather than remote tool servers. Before each 
        query, CG canonicalizes the command surface exposed to the agent (e.g., command 
        names, wrappers, permission policies, and command documentation), recomputes the 
        host-side memory digests, and verifies the resulting contextual state digest 
        against the latest SCM-authorized digest. Authorized state transitions follow 
        the commit-before-finalize workflow: CG commits the next contextual state digest 
        to SCM and finalizes local side effects only after receiving a valid receipt 
        from SCM. Any out-of-band modification to the protected command surface or 
        host-side memory is detected as a mismatch and blocks further processing.
    }
    \label{fig:elephantagent-cli}
\end{figure*}

The add-on components of \agentsystem --- a host-local CG and a remote SCM --- can 
also be applied to augment other agentic systems. \cref{fig:elephantagent-cli} 
illustrates the same augmentation architecture instantiated for a CLI-based deployment.
Under this mapping, the main change is the local interface: tool enumeration and 
invocation are mediated through CLI command dispatch rather than an MCP client/server 
interface. CG still protects the host-side command surface exposed to the agent
(e.g., command names, wrappers, permission policies, and command documentation)
together with the host-side memory, and SCM continues to provide the same remote state 
maintenance service. In this instantiation, the Local Command Backends play a role
analogous to that of the MCP Tool Servers in~\cref{fig:elephantagent-mcp}, except 
that they typically reside on the untrusted Host Platform as local commands, binaries, 
scripts, and filesystem-facing utilities rather than as remote tool servers. Thus, 
the \agentsystem protocol remains applicable: CG verifies the current contextual state 
against SCM before the agent's planning and execution, and commits the next contextual 
state digest before finalizing local side effects.

\section{Attack Vector Coverage}
\label{sec:attack-coverage}

This section provides an explicit catalog of the out-of-band tampering vectors 
covered by \agentsystem's Contextual State Continuity. It also identifies the 
vectors addressed conditionally by Historical Traceability, and finally lists 
the vectors that are explicitly out of scope. The catalog is an extension of 
the threat model in~\cref{sec:threat}; it does not introduce new assumptions 
or guarantees beyond those already established in~\cref{sec:analysis}.

\paragraph{In-Scope Vectors (Contextual State Continuity).}
Each vector below is detected before the agent's planning or execution by the 
digest or identifier check in \textsc{VerifyState} (\cref{alg:verify}), under 
Assumptions (A1)--(A5).

\begin{itemize}
    \item \emph{V1: Tool descriptor mutation between queries 
    (rug pull~\cite{invariantlabs2024mcp}).}
    An MCP Server registers a benign descriptor at onboarding and later
    mutates it between two queries to embed adversarial directives.
    The recomputed local contextual state digest changes while the
    SCM-authorized digest does not, so the digest equality check 
    in~\cref{alg:verify} fails and the query~aborts.

    \item \emph{V2: Memory file direct tampering.}
    An adversary with host-level access directly modifies one of
    $M^I_t$, $M^R_t$, or $M^A_t$ (e.g., replaces a transcript file or
    overwrites an artifact outside the query-driven workflow). The 
    locally recomputed contextual state digest changes, causing 
    \textsc{VerifyState} to fail.

    \item \emph{V3: Memory rollback.}
    An adversary replays an older sealed version of the locally persisted
    memory (e.g., the interaction history or managed artifacts). The locally 
    recomputed digest corresponds to a prior step, whereas SCM returns the 
    latest authorized digest for the ledger; the mismatch triggers a 
    verification failure.

    \item \emph{V4: Stale SCM proof replay.}
    An adversary replays an SCM proof issued for an earlier read challenge. 
    \textsc{VerifyState} binds each read to a fresh nonce $n_r$; the replayed 
    proof carries $n_{scm} \neq n_r$ and is rejected by the nonce check.

    \item \emph{V5: Forged SCM proof or receipt.}
    An adversary fabricates an SCM proof or commit receipt without access 
    to $SK_{SCM}$. Under Assumption~(A2), signature verification under the 
    attested $PK_{SCM}$ fails and the proof or receipt is rejected.

    \item \emph{V6: SCM endpoint impersonation.}
    An adversary stands up a rogue service claiming to be SCM. Under Assumption
    (A5), CG authenticates the SCM endpoint via remote attestation and pins the 
    attested $PK_{SCM}$; a rogue endpoint fails attestation or channel binding 
    and is rejected before any proof or receipt is used.

    \item \emph{V7: Interrupted state update after SCM commit.}
    An adversary crashes or interrupts CG after the next-state digest has been
    committed to SCM but before the corresponding local update has been fully
    materialized and recorded in the local ledger. The commit-before-finalize
    workflow in~\cref{alg:update} seals a pending update record before local
    side effects are finalized, so recovery either resumes that pending update
    or rejects a partially finalized local state whose recomputed digest no longer 
    matches the SCM-authorized state digest.
\end{itemize}

\paragraph{Conditionally In-Scope Vectors (Historical Traceability).}
The following vectors either take effect through properly committed
transitions or target the local artifacts used by Historical Traceability. 
They are handled conditionally under Assumption~(A6) via~\cref{lem:trace} 
and~\cref{cor:audit}.

\begin{itemize}
    \item \emph{V8: In-band semantic abuse (Scenario III).}
    An adversary induces the agent (e.g., through prompt injection in externally 
    retrieved content) to execute harmful operations that are then committed as 
    authorized transitions. The local ledger retains the ordered state transition 
    history and the corresponding snapshots retain the contextual state contents;
    post-hoc audit inspects the ledger together with the relevant snapshots to
    identify the first malicious step $k$, and recovery to a prior step $j < k$
    also relies on the corresponding ledger entry and snapshot.

    \item \emph{V9: Local artifact tampering.}
    An adversary tampers with the local artifacts used for Historical Traceability,
    including the local ledger and the corresponding snapshots. We distinguish two
    cases. First, direct modification of these sealed artifacts is rejected by the
    sealing integrity check. Second, rollback to older sealed artifacts is treated
    as an availability limitation: the ledger and snapshots remain valid but are
    limited to earlier versions, so audit and recovery are correspondingly constrained 
    (e.g., recovery can only restore a relatively stale contextual state).
\end{itemize}

\paragraph{Out-of-Scope Vectors.}
The following vectors are explicitly outside the protection scope of
\agentsystem and are either excluded by the trust model or discussed
in~\cref{sec:discuss}:

\begin{itemize}
    \item \emph{V10: TEE compromise.}
    Attacks that break the TDX hardware security boundary
    (e.g., side channel attacks against the TEE) invalidate
    Assumptions~(A1), (A4), and (A5) and are out of~scope.

    \item \emph{V11: LLM service channel compromise.}
    Tampering with prompts or responses on the channel to the external
    LLM service, or compromise of the remote model itself, is outside
    the scope of host-side contextual state continuity (see~\cref{sec:discuss}).

    \item \emph{V12: Availability attacks on local artifacts.}
    An adversary controlling the untrusted host may delete the
    local ledger or snapshots. This disrupts audit and recovery
    and renders the service unavailable by triggering fail-closed
    behavior, but it does not cause \agentsystem to accept tampered
    state (see~\cref{sec:discuss}).

    \item \emph{V13: SCM quorum loss.}
    If a majority of SCM nodes fail, the Nimble-backed SCM preserves
    safety but loses liveness; \agentsystem fails closed until SCM is
    reconfigured.

    \item \emph{V14: Compromised initial trust root.}
    A lenient \textsc{ApprovePolicy} can anchor a malicious initial
    contextual state for the agentic system (e.g., the basic TOFU-based
    \textsc{ApprovePolicy} used in our experimental prototype). Strengthening 
    \textsc{ApprovePolicy} (e.g., with manifest signature verification) 
    is advisable for production deployment.
\end{itemize}

\newenvironment{deferredlemma}[2]{%
  \par\addvspace{.5\baselineskip}%
  \noindent\textsc{Lemma #1\space(#2)}.\hspace{0.5em}\itshape
}{%
  \par\addvspace{.5\baselineskip}%
}

\section{Deferred Proofs}
\label{sec:proofs}

This section restates \cref{lem:continuity} and \cref{lem:trace}
from~\cref{sec:analysis} and provides their deferred proofs.

\subsection{Lemma 1}

\begin{deferredlemma}{1}{Contextual State Integrity and Freshness}
Fix a ledger $\ell$ maintained by SCM. If \textsc{VerifyState} 
(\cref{alg:verify}) returns \textit{success}, then the contextual state 
accepted by CG for release to the agent satisfies the following properties:

\begin{enumerate}
    \item \emph{Integrity.}
    The accepted contextual state is a faithfully evolved contextual 
    state for ledger $\ell$.

    \item \emph{Freshness.}
    The accepted contextual state is not a stale prior state, but the 
    latest state for ledger $\ell$.
\end{enumerate}

Moreover, any attempt to make CG proceed on an out-of-band tampered 
contextual state causes \textsc{VerifyState} to return \textit{failure} 
or abort, and by Assumption~(A4), the agent does not proceed to planning 
or execution on such a state.
\end{deferredlemma}

\begin{proof}
By~\cref{alg:workflow}, before invoking \textsc{VerifyState}, \textsc{RunQuery} 
fetches the local contextual state $C_{loc}$, computes its digest 
$h = \mathcal{H}(C_{loc})$, and then calls \textsc{VerifyState}$(\ell, id, h)$.
By~\cref{alg:verify}, \textsc{VerifyState} samples a fresh read-challenge nonce 
$n$ and invokes $\textsc{SCM.GetLatestState}(\ell, n)$ to obtain a proof
$\pi = \langle \ell_{scm}, id_{scm}, h_{scm},\allowbreak n_{scm}, \sigma \rangle$.
It returns \textit{success} only if (1) the proof is authentic and bound to 
the fresh nonce, namely, $n_{scm} = n$ and
$\textsc{VerifySig}(PK_{SCM},\allowbreak (\ell_{scm}, id_{scm}, h_{scm}, n_{scm}), \sigma) = \text{True}$;
and (2) the proof matches the observed local state summary, namely, 
$\ell_{scm} = \ell$, $id_{scm} = id$, and $h_{scm} = h$.

\paragraph{Integrity.}
By condition~(1) and Assumptions~(A2) and~(A5), the returned proof tuple 
$(\ell_{scm}, id_{scm}, h_{scm}, n_{scm}, \sigma)$ accepted by CG is authentic 
and untampered. By condition~(2), the locally accepted contextual state 
has digest $h = h_{scm}$ and is bound to the same ledger $\ell$.
Let $C$ denote the SCM-authorized contextual state for that step, so 
$h_{scm} = \mathcal{H}(C)$. If the currently observed local contextual state 
had been modified out of band, then the local state observed by \textsc{RunQuery}
would be some $C' \neq C$, and the observed local state digest would be
$h = \mathcal{H}(C')$. For condition~(2) to still hold, we would need 
$\mathcal{H}(C') = \mathcal{H}(C)$ despite $C' \neq C$, contradicting 
Assumption~(A3). Therefore, any out-of-band modification to the local 
contextual state causes the equality check in condition~(2) to fail and 
prevents \textsc{VerifyState} from returning \textit{success}. This 
establishes the integrity of the contextual state accepted by CG.

\paragraph{Freshness.}
By Assumption~(A1), $\textsc{SCM.GetLatestState}(\ell, n)$ returns the 
tail of the ledger for $\ell$, namely the latest authorized state for 
that ledger. By condition~(1), the proof accepted by CG is authenticated 
and bound to the fresh nonce $n$, so the accepted proof corresponds to 
that latest authorized state for this read. If an adversary rolls back 
the locally observed sequence identifier or any component of the local 
contextual state, then condition~(2) fails: either $id_{scm} \neq id$ or
$h_{scm} \neq h$. Thus, \textsc{VerifyState} returns \textit{failure},
and the stale local state is rejected. Likewise, making CG accept an
outdated SCM proof would require replaying a proof that does not match
CG's fresh nonce $n$, or producing a new valid proof for that nonce 
without SCM. The former is rejected by the nonce check, and the latter
contradicts Assumptions~(A2) and~(A5). If SCM is unreachable, 
\textsc{VerifyState} aborts, resulting in fail-closed behavior. Finally, 
by Assumption~(A4), the agent cannot bypass CG's verification; thus any 
stale local contextual state is blocked before planning or execution.
\end{proof}

\subsection{Lemma 2}

\begin{deferredlemma}{2}{Authorized, Logged, and Ordered Effective Transitions}
Fix a ledger $\ell$ maintained by SCM. Consider an execution of 
\cref{alg:workflow} in which \textsc{UpdateState} is invoked after 
a successful \textsc{VerifyState} and returns \textit{success}, 
advancing the local sequence identifier from $id$ to $id+1$. 
Then the following statements hold:
\begin{enumerate}
    \item There exists a valid SCM receipt $\sigma_{commit}$ for
    the tuple $(\ell, id+1, h_{next})$, where $h_{next}$ is the
    digest of the SCM-authorized next contextual state.

    \item The local ledger $\mathcal{L}_{\text{local}}$ contains a record 
    $(id+1, h_{next},\allowbreak \sigma_{commit},\allowbreak \text{snapshot\_ref})$
    corresponding to that authorized transition for ledger $\ell$. The 
    $\text{snapshot\_ref}$ field is present in every record; it either 
    contains a reference to a snapshot (if the snapshot policy created 
    a checkpoint) or is $\text{NULL}$ (otherwise).

    \item The locally recorded transitions for ledger $\ell$ preserve
    the order of the corresponding SCM-authorized transitions.
\end{enumerate}
\end{deferredlemma}

\begin{proof}
By~\cref{alg:workflow}, CG invokes \textsc{VerifyState} before calling 
\textsc{UpdateState}. By~\cref{lem:continuity}, this ensures that the 
current local contextual state is consistent with the latest SCM-authorized 
state for ledger $\ell$.

Consider an execution in which \textsc{UpdateState} returns \textit{success}. 
On this \textit{success} path, $\textsc{SCM.Commit}(\ell, id+1, h_{next})$ 
must have returned $(\textit{status}=\text{True}, \sigma_{commit})$, and CG 
must have verified $\sigma_{commit}$ under $PK_{SCM}$ before finalizing any 
local side effects. By Assumptions~(A1) and~(A2), $\sigma_{commit}$ is 
therefore a valid SCM receipt for the authorized transition 
$(\ell, id+1, h_{next})$, establishing statement~(1).

Moreover, \textsc{UpdateState} returns \textit{success} only after CG 
appends the record $(id+1, h_{next}, \sigma_{commit}, \text{snapshot\_ref})$
to the local ledger $\mathcal{L}_{\text{local}}[\ell]$ and seals it
(\cref{alg:update}), where $\text{snapshot\_ref}$ is either a reference
to a snapshot created by the snapshot policy or $\text{NULL}$. This 
establishes statement~(2).

Finally, by Assumption~(A1), SCM accepts commits only at consecutive 
identifiers for each ledger $\ell$, so SCM-authorized transitions for 
ledger $\ell$ are ordered by their identifiers. Because each successful 
local record carries the same incremented identifier $id+1$ as the 
corresponding authorized transition, the sequence of locally recorded 
transitions preserves the order of the corresponding SCM-authorized 
transitions. Assumption~(A4) ensures that CG cannot be bypassed on 
the update path, so each locally recorded transition follows this 
commit-before-finalize workflow. This establishes statement~(3).
\end{proof}

\section{Deployment Details of CG and SCM}
\label{sec:deploy-details}

This section provides additional deployment details for the CG and SCM
components of~\cref{sec:eval:impl}, including SCM failure handling,
attested authentication of the SCM endpoint, and private signing-key 
management.

\paragraph{SCM Reconfiguration Under Failures.}
If SCM loses some nodes, an administrator can replace the failed nodes
through Nimble's reconfiguration mechanism before quorum is lost.
However, if SCM loses a majority of its nodes, the Nimble-backed SCM
preserves safety but loses liveness, because no quorum remains to produce
receipts or proofs that CG can verify. In this case, \agentsystem fails 
closed until SCM is reconfigured.

\paragraph{SCM Endpoint Authentication.}
We deploy CG and SCM inside TDX-protected VMs.
CG authenticates the SCM service endpoint via remote attestation
before using any SCM proof or receipt for verification.
The attestation follows a challenge--response protocol.
CG sends a fresh nonce $n_{CG}$ to SCM, and SCM returns a TDX quote
whose \texttt{report\_data} commits to
$\text{SHA256}(n_{CG}\,\|\,\texttt{context}\,\allowbreak\|\,PK_{SCM})$,
where $PK_{SCM}$ is the SCM service public key used both for
proof/receipt verification and for binding the attested SCM identity
to the authenticated channel.
CG then verifies the returned quote through a standard
quote-verification pipeline and validates the attested context and
configuration against an allowlist policy.
CG enforces channel binding by checking that the TLS peer-certificate
public key matches the attested $PK_{SCM}$.
After these checks, CG pins the attested $PK_{SCM}$ for subsequent
verification. CG caches the attestation record and re-attests when
necessary (e.g., after SCM reconfiguration). This ensures that CG uses
SCM proofs or receipts only from an authenticated SCM endpoint.

\paragraph{Private Signing-Key Management.}
In our current experimental prototype, the SCM private signing key is
loaded locally. A production deployment of \agentsystem should instead
use a production-grade key management service for private key
provisioning and lifecycle management (e.g., Alibaba Cloud KMS~\cite{alibabacloud2026kms}).

\section{Limitations}
\label{sec:discuss}

This section discusses the limitations of \agentsystem.

\paragraph{LLM Communication Security.}
\agentsystem does not secure the communication channel between the agent 
and the external LLM service (e.g., the \texttt{qwen-flash} API used in 
this study). If this channel or the remote model service is compromised,
an adversary may tamper with prompts or responses outside the protection 
scope of host-side contextual state continuity. One practical way to mitigate 
this exposure is to deploy a trusted local model (e.g., Qwen 3.5-27B) on 
an NVIDIA GPU (e.g., H100) with confidential-computing support~\cite{apsey2023confidentialh100}.

\paragraph{Availability Under Adversarial Interference.}
\agentsystem prioritizes safety over liveness under adversarial interference:
an adversary may block progress by tampering with host-side persisted contextual 
state or preventing access to the latest SCM-authorized state, thereby triggering 
fail-closed behavior during verification. The audit and recovery guarantees of 
Historical Traceability are conditional on artifact availability: they require 
the local ledger and the corresponding snapshots. Although TEE sealing protects 
these artifacts against tampering, it does not guarantee availability~\cite{intel_tdx_whitepaper}; 
an adversary controlling the host platform may simply delete them, thereby 
disrupting audit, recovery, and query processing.

\paragraph{Recovery Scope and Freshness.}
The recovery mechanism of Historical Traceability restores the scoped contextual 
state $C$ maintained by \agentsystem, rather than the agent's full runtime context. 
This design keeps state maintenance and verification practical by limiting protection 
to the bounded, security-critical subset of context. If broader restoration is 
desired, additional components can be incorporated into $C$, but doing so increases 
digest recomputation, storage, and synchronization overhead. Moreover, recovery 
verifies the integrity of a selected prior state, not its freshness. By checking 
the recorded receipt and digest, \agentsystem ensures that the restored state was 
a previously faithfully evolved contextual state, but it does not guarantee that 
the state is recent: the first malicious transition may have occurred much earlier 
without being noticed. Recovering to a more recent known-good state may therefore 
require periodic audits.

\end{document}